\def\isarxiv{1}
\def\paperTitle{Toward Highly Efficient and Private Submodular Maximization via Matrix-Based Acceleration}
\def\paperRTitle{\paperTitle} 
\def\paperAuthor{
Boyu Liu \and
Lianke Qin\thanks{University of California, Santa Barbara.} \and
Zhao Song\thanks{\texttt{magic.linuxkde@gmail.com}. Simons Institute for the Theory of Computing, UC Berkeley.} \and
Yitan Wang\thanks{Yale University.} \and
Jiale Zhao\thanks{Guangdong University of Technology.}
}
\theoremstyle{plain}
\newtheorem{theorem}{Theorem}[section]
\newtheorem{lemma}[theorem]{Lemma}
\newtheorem{definition}[theorem]{Definition}
\newtheorem{corollary}[theorem]{Corollary}
\newcommand{\wt}{\widetilde}
\newcommand{\ov}{\overline}
\newcommand{\N}{\mathcal{N}}
\newcommand{\R}{\mathbb{R}}
\renewcommand{\tilde}{\wt}
\newcommand{\Tmat}{{\cal T}_{\mathrm{mat}}}
\newcommand{\opt}{\mathsf{opt}}
\newcommand{\alg}{\mathsf{alg}}
\DeclareMathOperator*{\E}{{\mathbb{E}}}
\DeclareMathOperator{\OPT}{OPT}
\DeclareMathOperator{\poly}{poly}
\DeclareMathOperator{\rank}{rank}
\DeclareMathOperator{\tr}{tr}
\icmltitlerunning{\paperRTitle}
\begin{document}

\ifdefined\isarxiv
%%% The below part is the title and author of ArXiv version.

\date{}
\title{\paperTitle}
\author{\paperAuthor}

\else
%%% The below part is the title and author of ICML version.

\twocolumn[
  \icmltitle{\paperTitle}
  % It is OKAY to include author information, even for blind submissions: the
  % style file will automatically remove it for you unless you've provided
  % the [accepted] option to the icml2026 package.

  % List of affiliations: The first argument should be a (short) identifier you
  % will use later to specify author affiliations Academic affiliations
  % should list Department, University, City, Region, Country Industry
  % affiliations should list Company, City, Region, Country

  % You can specify symbols, otherwise they are numbered in order. Ideally, you
  % should not use this facility. Affiliations will be numbered in order of
  % appearance and this is the preferred way.
  \icmlsetsymbol{equal}{*}

  \begin{icmlauthorlist}
    \icmlauthor{Firstname1 Lastname1}{equal,yyy}
    \icmlauthor{Firstname2 Lastname2}{equal,yyy,comp}
    \icmlauthor{Firstname3 Lastname3}{comp}
    \icmlauthor{Firstname4 Lastname4}{sch}
    \icmlauthor{Firstname5 Lastname5}{yyy}
    \icmlauthor{Firstname6 Lastname6}{sch,yyy,comp}
    \icmlauthor{Firstname7 Lastname7}{comp}
    %\icmlauthor{}{sch}
    \icmlauthor{Firstname8 Lastname8}{sch}
    \icmlauthor{Firstname8 Lastname8}{yyy,comp}
    %\icmlauthor{}{sch}
    %\icmlauthor{}{sch}
  \end{icmlauthorlist}

  \icmlaffiliation{yyy}{Department of XXX, University of YYY, Location, Country}
  \icmlaffiliation{comp}{Company Name, Location, Country}
  \icmlaffiliation{sch}{School of ZZZ, Institute of WWW, Location, Country}

  \icmlcorrespondingauthor{Firstname1 Lastname1}{first1.last1@xxx.edu}
  \icmlcorrespondingauthor{Firstname2 Lastname2}{first2.last2@www.uk}

  % You may provide any keywords that you find helpful for describing your
  % paper; these are used to populate the "keywords" metadata in the PDF but
  % will not be shown in the document
  \icmlkeywords{Machine Learning, ICML}

  \vskip 0.3in
]

\printAffiliationsAndNotice{} 

\fi

\ifdefined\isarxiv
\begin{titlepage}
  \maketitle
  \begin{abstract}

Submodular function maximization is a critical building block for diverse tasks, such as document summarization, sensor placement, and image segmentation. Yet its practical utility is often limit by the $O(knd^2)$ computational bottleneck. In this paper, we propose an integrated framework that addresses efficiency and privacy simultaneously. First, we introduce a novel matrix-based computation paradigm that accelerates function evaluations. Second, we develop approximate data structures that further streamline the optimization process, achieving a theoretical complexity of $O(\epsilon^{-2}(nd+kn+kd^2)\log(k/\delta))$. Third, we integrate ($\epsilon, \delta$)-DP guaranties to address the privacy concerns inherent in sensitive optimization tasks.

  \end{abstract}
  \thispagestyle{empty}
\end{titlepage}

%{\hypersetup{linkcolor=black}
%\tableofcontents
%}
\newpage

\else

\begin{abstract}

\end{abstract}

\fi

%%% The part below is the main body and reference

\section{Introduction} \label{sec:introduction}
Submodular functions are one important set of functions in machine learning tasks.
A submodular function is a set function defined on a set of data points, and the function has the property that the incremental value for any individual data point to the function value decreases as the size of the data set increases.
Submodular functions have many real-world applications, including sparse estimation \cite{wib15}, mean field inference \cite{bian2019optimal}, variational inference \cite{djolonga2014map, chen2016submodular}, active learning \cite{wib15}, stochastic optimization \cite{golovin2011adaptive}, document summeriazation \cite{lin2011class}), sensor placement \cite{mehr2018submodular}, and image segmentation \cite{shen2019submodular}. %\Yitan{Applications have been added here.}

In these applications, we typically face an optimization problem to find the maximum point of the submodular function subject to some constraints.
Even for the simplest constraints, e.g. cardinality constraints, maximize general submodular functions can be NP-hard \cite{feige1998threshold, krause2012near}.
Fortunately, it is well-known that submodular function maximization with cardinality constraints can be well approximated by greedily adding elements \cite{nwf78}.
Submodular function maximization with various other constraints and structure of objective functions, including matroid constraints \cite{calinescu2007maximizing, krause2009simultaneous, calinescu2011maximizing}, knapsack constraints \cite{leskovec2007cost, sviridenko2004note}, min-cost coverage function \cite{wolsey1982analysis}, and  mutual information function \cite{krause2012near, narasimhan2005q} are also studied. The first linear-time algorithm for maximizing a general monotone submodular function subject to a cardinality constraint was developed by~\cite{mbk+15}.
%\Danyang{many many related work} 
% \cite{bian2017guaranteed} \Yitan{continuous}.
In practice, it sometimes happens that instead of having access to the complete input data, we do not know the information about the future and have to make online decisions \cite{streeter2008combining, munagala2005pipelined}.
Recently, online submodular function maximization has received much attention \cite{streeter2008online, chen2018online}.

%subsection{Related Work}
\iffalse
For the general online convex optimization problem, $O(\sqrt{T})$ regret could be achieved \cite{zinkevich2003online}.
Then for a strongly convex function, the regret bound could be improved to $O(\log T)$ \cite{hazan2007logarithmic}.
It has been observed that convexity is closely related to continuous submodularity \cite{bach2019submodular}.
In the partially transparent feedback and deterministic setting, $(1-1/e)$-regret could be achieved \cite{streeter2008online}.
The bandit setting was developed in \cite{flaxman2004online} and has been broadly studied in a series of works \cite{dani2007price, agarwal2011stochastic, bubeck2016multi}.
With stronger assumptions, sharper bounds could be achieved \cite{kleinberg2004nearly, agarwal2010optimal, saha2011improved, hazan2014bandit, dekel2015bandit, bubeck2017kernel}.
\fi
In \cite{chen2018online}, continuous submodular function maximization in online setting was studied.
Projection-free algorithm was studied in  \cite{chen2018projection}.
Submodular bandit with extra structures, e.g. linear submodular bandit and adaptive submodular bandit problem, were studied \cite{yue2011linear, yu2016linear, gabillon2013adaptive}.

%In this paper, we consider both the offline and the online versions of the problem: in each iteration, the data set is fixed (offline) or changes incrementally (online), and a user can issue a query to maximize the function on a given subset of the data.
%For the online setting, the user can be adversarial, issuing queries based on previous query results to break the competitive ratio for the the online algorithm. Today, the best-known algorithm for online submodular function maximization has to pay $\Omega(n)$ to linear scan all the $n$ items. 
To accelerate submodular function maximization algorithms and save computation resources,
we build an embedding representation of the submodular maximization problem and propose a new method based on a novel search data structure which only spends linear in $n$ time at the initialization.
In each iteration of the algorithm, we only need to pay sublinear in $n$ time to pick an item to add into the set. The use of approximation algorithms is the key to enabling our data structures to perform efficient computations. After solve the classical problem, we further analyzed extensions under various constraints and the online setting. The results demonstrate that our data structures can also efficiently solve the extended problems.

In many real-world applications, the input data for submodular maximization involves sensitive information, such as social network connections, sensor locations, or user preferences. Directly applying existing maximization algorithms may leak such private information. To address this, differential privacy has been introduced as a framework for protecting individual data while still enabling useful computation.

In this work, we further extend our fast data structure based algorithm to the differentially private setting. Our method provides $(\epsilon, \delta)$-DP while preserving near optimal approximation guarantees, achieving efficient performance in both the offline and online scenarios.

{\bf Our Results}
The main result of this paper can be informally stated as Theorem \ref{thm:main_informal}.
For theorem \ref{thm:main_formal}, we put it in the introduction section to be a quick preview of the main result of this
work. So the detailed problem statement is placed in the section \ref{sec:preli}. The submodular
function is defined in Definition \ref{def:submodular_function} and the problem statement is the optimization problem described in
equation \eqref{eqn:def_submodular_maximization}. In Section~\ref{sec:dp_analysis}, we further extend our algorithm to the differential privacy setting and provide formal privacy guarantees.
\begin{theorem}[Main result, informal version of Theorem \ref{thm:main_formal}] \label{thm:main_informal}
\iffalse
There exists an algorithm (Algorithm~\ref{alg:fast_submodular}) that solves submodular function maximization problem with the cardinality constraint in 
% \begin{align*}
$
\wt{O}( nd+kn+kd^2 )
$
% \end{align*}
time and returns a set $S$ with 
\begin{align*}
% $
f(S) \ge (1-1/e)\max_{|T|=k}\{f(T)\} - k %k(2-1/e)\epsilon,
% $
\end{align*}
where $f$ is the objective function, $n$ is the size of the ground set, $d$ is the dimension of the problem embedding, and $k$ is the cardinality constraint.
\fi
There is a submodular algorithm such that, given a submodular function maximization problem with cardinality constraint and two precision parameters $\epsilon,\delta$ as input, it runs in $O(\epsilon^{-2}(nd+kn+kd^2)\log(k/\delta))$ time and returns a solution $S$ with $f(S) \ge (1-1/e)\max_{|S|=k}\{f(S)\} - k(2-1/e)\epsilon$  with probability at least $1-\delta$.
\end{theorem}

% Zixi: In order to address the problem of direct citation of “best previous time”, we now refine the sentence.

% We remark that the best previous algorithm takes $O(nd^2 k)$ time and returns a set $S$ such that 
% % \begin{align*}
% $
%     f(S) \geq (1-1/e) \max_{|T|=k} \{f(T)\}.
% $
% % \end{align*}

As a common baseline, the standard naive greedy algorithm has a time complexity of $O(n d^2 k)$. This algorithm is known to return a set $S$ satisfying $ f(S) \geq (1-1/e) \max_{|T|=k} \{f(T)\} $. 
See details in Theorem~\ref{thm:greedy} and Theorem~\ref{thm:trivial_algorithm}. The additive error's impact on achieving a constant factor approximation, including the role of the embedding dimension $d$, is analyzed in the Corollary~\ref{cor:error_approximate_greedy_algo}. % is analyzed in the Corollary \ref{cor:app_error_approximate_greedy_algo} and Section \ref{sec:combination}. % Revised in ICML2026

\iffalse
\begin{algorithm}[!ht]\caption{Fast Submodular Function Maximization Algorithm }\label{alg:fast_submodular}
\begin{algorithmic}[1]
\Procedure{FastSubmodularAlgorithm}{$n,k,\{u_1, \cdots, u_n \} \subseteq \R^d$} \Comment{Theorem~\ref{thm:main_informal}}
    \State $S_0 \gets \emptyset$
    \State DS ds.\textsc{Init} ($u_1,u_2,\cdots,u_n,\epsilon,\delta/k$) \Comment{Algorithm~\ref{alg:ds_v2}}
    \For{$t=0 \to k-1$}
        \State $A_t \gets h (S_t)$, $j \gets $ ds.\textsc{Query}($A_t$), $S_{t+1} \gets S_t \cup \{j\}$
        \State ds.\textsc{Delete}($j$)
    \EndFor
    \State \Return $S_k$
\EndProcedure
\end{algorithmic}
\end{algorithm}
\fi

{\bf Roadmap.}
The rest of our paper is organized as follows. 
Section \ref{sec:related_work} introduce the related work.
Section \ref{sec:preli} formally defines basic notations and our research problem. Section \ref{sec:tech_ow} provides an overview of our techniques. Section \ref{sec:benchmark} provides a benchmark of submodular algorithms, where all of our algorithms are based on this benchmark. Section~\ref{sec:previous_tools} provides a brief description about tools from previous works we use. Section~\ref{sec:ds} present our data structures, including auxiliary, main and extended data structures.
Section~\ref{sec:dp_analysis} provides the differential privacy analysis of our algorithms.
%Section \ref{sec:combination} utilizes our fast quadratic form search data structure to form a fast submodular algorithm in the format of the benchmark in Section \ref{sec:benchmark}, and analyzes its efficiency.
Section \ref{sec:conclusion} makes a summary of this paper.
%\Yitan{Hang, this sentence is not complete. our what?} \Hang{Thanks for reminding me.}
\section{Related Work}\label{sec:related_work}
% \subsection{Submodular Functions}
\paragraph{Submodular functions.}
The class of submodular functions \cite{nwf78} captures the phenomena of decreasing marginal gain.
In many machine learning applications, e,g., computer vision~\cite{jb11}, causal inference~\cite{sjs10,zs16}, compressed sensing~\cite{bach10,bjm+12}, categorical feature compression~\cite{bcef+2019}, fairness~\cite{cdkv16, bs15}, active learning~\cite{gb10, gk11, wib15}, dictionary learning~\cite{dk11}, data summarization~\cite{lb11, tiwb14}, crowd teaching~\cite{sbb+14}, variational inference~\cite{cck16}, and influence maximization~\cite{kkt03, gsdz+16,zbcb+16}, submodular functions are playing important roles.

% computer vision Jegelka and Bilmes [2011a,b], %
%data summarization Lin and Bilmes [2011b,a], Tschiatschek et al. [2014], Chen et al. [2018a, 2017b],  %
% influence maximization Kempe et al. [2003], Rodriguez and Schölkopf [2012], Zhang et al. [2016],
% feature compression Bateniet al. [2019], %
% network inference Chen et al. [2017a],
% active and semi-supervised learning Guillory and Bilmes [2010], Golovin and Krause [2011], Wei et al. [2015], %
% crowd teaching Singla et al.~\cite{sbb+14},
% dictionary learning Das and Kempe~\cite{dk11}, fMRI parcellation Salehi et al.~\cite{sks+17}, %
% compressed sensing and structured sparsity Bach~\cite{bach10}, Bach et al.~\cite{bjm+12}, %
% fairness in machine learning Balkanski and Singer~\cite{bs15}, Celis et al.~\cite{cdkv16}, %
% learning causal structures Steudel et al.~\cite{sjs10}, Zhou and Spanos~\cite{zs16}, to name a few.

% \subsection{Differential Privacy}
\paragraph{Differential privacy.}
Differential Privacy (DP), first formalized in~\cite{dmns06}, has become widely recognized as the gold standard for rigorous privacy protection. A growing body of research has focused on adapting classical algorithms~\cite{aimn23, ll23, csw+23, syyz23_dp, ffl+25} and data structures~\cite{qjs+22} to satisfy differential privacy guarantees. More recently, the application of DP has expanded well beyond traditional settings, encompassing entire machine learning pipelines~\cite{lds+21, phk+23, samb24, qwh24, gsyz23_dp, lhr+24, hll+24}.

% \subsection{Sketching}
\paragraph{Sketching.}
In this work we use sketching technique to speedup an optimization problem which is called submodular optimization. Here we briefly a number of previous sketching work in different areas. Sketching has been applied to speedup many continuous optmization problems such as linear programming \cite{cls19,song19,b20,jswz21,sy21,gs22}, empirical risk minimization \cite{lsz19,qszz23}, cutting plane method \cite{jlsw20}, computing John Ellipsoid \cite{ccly19,syyz22}. 
Other than continuous optimization, several discrete optimization problems also can be applied \cite{dsw22,sxz22,z22,jlsz23}.  Sketching ideas also have applied to several theoretical large language models problems such as exponential and softmax regression \cite{lsz23,gsy23,dls23_softmax}, sketching the feature dimension of attention matrix \cite{dms23}.  In many machine learning tasks, sketching technique is also very useful, matrix completion \cite{gsyz23},  adversarial training \cite{gqsw22}, training over-parameterized neural tangent kernel regression \cite{bpsw21,szz21,z22,als+22,hswz22}, matrix sensing \cite{dls23_sensing,qsz23}, kernel density estimation \cite{qrs+22}, federated learning \cite{swyz23}. In addition, sketching technique also can be applied to theory of relational database \cite{qjs+22}.

\section{Preliminary} \label{sec:preli}

\subsection{Notations}

% {\bf Notations.} 
For a square matrix $A$, we use $\tr[A]$ to denote the trace of $A$. 
For a set $S\subseteq[n]$ , we use $\overline{S}$ to denote the complementary set of $S$, which is equivalent to $[n]\backslash S$.
If there is no ambiguity, we will omit $[n]$.
We use $n$ to denote the number of elements in the ground set $[n]$.
We use $d$ to denote the dimension of each vector.
For $t\ge 1$, we use $S_t$ to denote the set of selected elements at time $1,2,\cdots,t$.
For two $d$-dimensional vectors $u,v$, we use $\langle u,v \rangle$ to denote their inner product. For two $n\times m$-dimensional matrices $A,B$, we will also use $\langle A,B \rangle$ to denote their inner product, that is, their elementary-wise product. For any function $f$, we use $\wt{O}(f)$ to denote $\poly (\log f)$.

%\subsection{Problem Formulation}\label{sec:formulation}
We formalize the submodular function maximization problem with cardinality constraint in this section.
Informally, a set function is submodular if it has decreasing marginal increment.
\begin{definition}[Submodular function]\label{def:submodular_function}
Let $f:2^{[n]}\rightarrow\mathbb{R}$ denote a set function.
Define the discrete derivative $\Delta_f$ as
% \begin{align*}
$
   \Delta_f(i|S):=f(S\cup\{i\})-f(S). 
$
% \end{align*}
Function $f$ is submodular if for every $S\subseteq T$ and $i \in [n]-T$, it holds that
% \begin{align*}
$
\Delta_f(i|T) \le \Delta_f(i|S).
$
% \end{align*}

\end{definition}
For the purpose of simplicity, we present the submodular maximization problem with cardinality constraint in this paper.
For other typical constraints like knapsack constraint and matroid constraint, our method could naturally be extended and we discuss the extension in section \ref{sec:extension}.

\subsection{Goal}
% {\bf Goal}
In this paper, we are going to propose algorithms solving optimization problem \eqref{eqn:def_submodular_maximization} efficiently.
\begin{align}\label{eqn:def_submodular_maximization} 
\max_{S\subseteq [n]}  f(S), \textrm{~~s.t.~~}   |S| \le k
\end{align}

\subsection{Representation of $f(S)$}
% {\bf Representation of $f(S)$.}
One problem arises in designing algorithm is how we select the representation of input instances.
As the constraint part in optimization problem \eqref{eqn:def_submodular_maximization} is simple, it remains to decide the representation of $f(S)$.
Suppose $S=\{i_1,i_2,\cdots,i_m\} \subseteq [n]$, we could always decompose $f(S)$ into sum of increment as
% \begin{align}\label{eqn:decompose_f_1}
$
    f(S) = f(S_0) + \sum_{j=1}^m f(S_j) - f(S_{j-1}),
$
% \end{align}
where $S_0 = \emptyset$ and $S_j = S_{j-1} + \{i_j\}$.
Without loss of generality, we could assume $f(\emptyset) = 0$.
Note that by the definition of $\Delta_f(i|S)$, we have
% \begin{align*}
$
f(S_j) - f(S_{j-1}) = \Delta_f(i_j | S_{j-1}).
$
% \end{align*}
So the decomposition form can be simplified as
\begin{align}\label{eqn:decompose_f_2}
    f(S) = \sum_{j=1}^m \Delta_f(i_j | S_{j-1})
\end{align}

% Zixi: In order to address the reviewer's complaint of quadratic embedding of f.

% To introduce our advanced data structure later, we further represent $\Delta_f(i | S)$ in the form of
% \begin{align}\label{eqn:decompose_f_3}
%     \Delta_f(i | S) = u_i^\top h(S) u_i
% \end{align}
% where $u_i \in \mathbb{R}^d$ is a $d$-dimensional vector and $h(S)\in\mathbb{R}^{d\times d}$ is a $d$-by-$d$ matrix.

\begin{definition}[Quadratic embedding of $f$]
    For a family of set $\{u_i\} \in \mathbb{R}^{d}$ and a map $h : 2^{[n]} \rightarrow \mathbb{R}^{d \times d} $, $S \mapsto h(S)$,
    represent the submodular function $f$ if, for every $S \subseteq [n]$ and every
    $i \notin S$,
    \begin{align}\label{eqn:decompose_f_3}
        \Delta_f(i | S) = u_i^\top h(S) u_i
    \end{align}
\end{definition}
Thus our input instance $(n,k,f)$ of optimization problem \eqref{eqn:def_submodular_maximization} can be represented by $(n, k, U, h)$ where $U=[u_1,\cdots,u_n]$.
We emphasize that we are not assuming $f(S)$ must have some specific form here.
Note that for $U$ we have $n\cdot d$ free variables and for $h(S)$ we have $2^n \cdot d^2$ free variables.
Therefore, in total, we have $nd+2^nd^2$ free variables and there are $n\cdot 2^n$ different values of $\Delta_f(i|S)$.
Hence we have enough degrees of freedom to represent any submodular function $f$ in the form of \eqref{eqn:decompose_f_2} and \eqref{eqn:decompose_f_3} when $d \ge \sqrt{n}$. 

In practice, an important subclass of submodular functions is monotone submodular functions, i.e. functions $f$ satisfying $f(A) \le f(B)$ for all $A\subseteq B \subseteq [n]$.
When $f$ is monotone, we could restrict all $h(S)$ to be a positive semidefinite (PSD) matrix.
In the case of PSD $h(S)$, we can achieve faster acceleration.It should be stressed that using the quadratic embedding form allows the problem to be transformed into vector operations, thereby enabling the application of numerous matrix approximation algorithms to accelerate the computation.

\subsection{Locality Sensitive Hashing.}
\label{sec:previous_tools:lsh}
We present more previous lemmas on locality sensitive hashing.

The approximate maximum inner product search problem is to find a point $p$ that achieves the maximum inner product with respect to the given point up to a certain approximation error ratio.
\begin{definition}[Approximate Max-IP, \cite{ssx21}]
Let $c\in(0,1)$ and $\tau\in(0,1)$. Given an $n$-point dataset $P\subset \mathbb{S}^{d-1}$ on the sphere, the goal of the $(c,\tau)$-Maximum Inner Product Search (Max-IP) is to build a data structure that, given a query $q\in \mathbb{S}^{d-1}$ with the promise that there exists a datapoint $p\in P$ with $\langle p,q \rangle \ge \tau$, it reports a datapoint $p'\in P$ with similarity $\langle p',q \rangle \ge c\cdot \tau$.
\end{definition} 
The naive method for the approximate MAX-IP problem by enumeration could solve this problem in $O(nd)$ time and $O(nd)$ space.  
With more elaborate design, the recent work (\cite{ssx21}) gives better solution.
There are two choices provided in \cite{ssx21}.

\begin{theorem}[Theorem 8.2, page 19, \cite{ssx21}]\label{thm:max-IP}
Let $c\in(0,1)$ and $\tau\in(0,1)$. Given a set of $n$-points $Y\subset \mathbb{S}^{d-1}$ on the sphere, one can construct a data structure with ${\cal T}_{\mathrm{init}}$ preprocessing time and ${\cal S}_{\mathrm{space}}$ space so that for any query $x\in {\cal S}^{d-1}$, we take query time complexity $O(d\cdot n^{\rho}\log(1/\delta))$:
%\begin{itemize}
     if $\mathsf{Max}$-$\mathsf{IP}(x,Y) \ge \tau$, then we output a vector in $Y$ which is a $(c,\tau)-\mathsf{Max-IP}$ with respect to $(x,Y)$ with probability at least $1-\delta$, where $\rho:=f(c,\tau)+o(1)$.
     Otherwise, we output $\mathsf{fail}$.
%\end{itemize}
Further, 
%\begin{itemize}
     If ${\cal T}_{\mathsf{init}}=O(dn^{1+\rho})\log(1/\delta)$ and ${\cal S}_{\mathsf{space}}=O(n^{1+\rho}+dn)\log(1/\delta)$, then $f(c,\tau)=\frac{1-\tau}{1-2c\tau+\tau}$.
     If ${\cal T}_{\mathsf{init}}=O(dn^{1+o(1)})\log(1/\delta)$ and ${\cal S}_{\mathsf{space}}=O(n^{1+o(1)}+dn)\log(1/\delta)$, then $f(c,\tau)=\frac{2(1-\tau)^2}{(1-c\tau)^2}-\frac{(1-\tau)^4}{(1-c\tau)^4}$.
%\end{itemize}
\end{theorem}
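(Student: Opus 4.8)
The plan is to prove this by reducing $(c,\tau)$-Max-IP on the unit sphere to an approximate near neighbor (ANN) problem and then instantiating a locality sensitive hashing (LSH) data structure: the preprocessing time ${\cal T}_{\mathrm{init}}$, the space ${\cal S}_{\mathrm{space}}$, and the query time $O(d\cdot n^{\rho}\log(1/\delta))$ will all be inherited directly from the LSH construction, while the two stated forms of $f(c,\tau)$ will correspond to two LSH families sitting at two different points on the space/time tradeoff curve.

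First I would exploit the isometry between inner product and Euclidean distance on the sphere: for any $p,q\in\mathbb{S}^{d-1}$ we have $\|p-q\|_2^2 = 2-2\langle p,q\rangle$, so that a large inner product is equivalent to a small Euclidean distance. Under this identity the promise $\mathsf{Max-IP}(x,Y)\ge\tau$ becomes the existence of a data point within Euclidean distance $r:=\sqrt{2(1-\tau)}$ of $x$, and reporting $p'$ with $\langle p',x\rangle\ge c\tau$ becomes reporting a point within distance $\sqrt{2(1-c\tau)}$. Hence the Max-IP instance is exactly a $(c_{\mathrm{ANN}},r)$-ANN instance with approximation ratio $c_{\mathrm{ANN}}=\sqrt{(1-c\tau)/(1-\tau)}$.

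Second, I would instantiate the ANN query with LSH and verify the two exponents by substitution. For the first bound, plugging $c_{\mathrm{ANN}}$ into the classical Euclidean LSH exponent $\rho=1/(2c_{\mathrm{ANN}}^2-1)$ and simplifying $2c_{\mathrm{ANN}}^2-1=(1-2c\tau+\tau)/(1-\tau)$ yields exactly $\rho=(1-\tau)/(1-2c\tau+\tau)$, matching the first case; the tables then require build time $O(dn^{1+\rho}\log(1/\delta))$ and space $O((n^{1+\rho}+dn)\log(1/\delta))$. For the second bound I would instead use the near-optimal spherical (filtering) LSH, which trades larger but still $n^{1+o(1)}$ space for a smaller query exponent; writing $\alpha:=c_{\mathrm{ANN}}^{-2}=(1-\tau)/(1-c\tau)$, its query exponent evaluates to $2\alpha^2-\alpha^4=\tfrac{2(1-\tau)^2}{(1-c\tau)^2}-\tfrac{(1-\tau)^4}{(1-c\tau)^4}$, precisely the second stated form. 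In both cases, I would boost the success probability to $1-\delta$ by taking $O(\log(1/\delta))$ independent repetitions and reporting the best candidate, and output $\mathsf{fail}$ if no probed bucket returns a point within the distance threshold.

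The main obstacle is the second case: deriving and justifying the exponent $2\alpha^2-\alpha^4$ requires the delicate analysis of the data-independent spherical LSH family (concentration of random caps / Gaussian filters and the induced collision probabilities), rather than the one-line substitution that suffices for the first case. A secondary subtlety is making the guarantee hold for a worst-case query together with the promise structure: I would verify that the $O(\log(1/\delta))$ repetitions suffice, and that the ``otherwise output $\mathsf{fail}$'' branch never produces a false negative when $\mathsf{Max-IP}(x,Y)\ge\tau$ holds, which reduces to the standard LSH guarantee that a point at distance $\le r$ collides with $x$ in at least one repeated hash table with probability at least $1-\delta$.
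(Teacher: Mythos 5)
The paper never proves this statement: it is quoted verbatim as Theorem~8.2 of \cite{ssx21} in the ``tools from previous work'' section, so there is no in-paper proof to compare yours against. Your sketch is nevertheless the standard route behind such results, and it is consistent with the one ingredient the paper does state explicitly, namely the Andoni--Razenshteyn ANN data structure (Theorem~\ref{thm:ANN}) with exponent $\rho=\frac{1}{2c^2-1}+o(1)$. Your algebra for the reduction is correct: on $\mathbb{S}^{d-1}$ the identity $\|p-q\|_2^2=2-2\langle p,q\rangle$ turns the promise $\mathsf{Max\text{-}IP}(x,Y)\ge\tau$ into a $(c_{\mathrm{ANN}},r)$-ANN instance with $r=\sqrt{2(1-\tau)}$ and $c_{\mathrm{ANN}}=\sqrt{(1-c\tau)/(1-\tau)}$, and substituting into $1/(2c_{\mathrm{ANN}}^2-1)$ gives exactly $\frac{1-\tau}{1-2c\tau+\tau}$, i.e.\ $\frac{\alpha}{2-\alpha}$ with $\alpha=(1-\tau)/(1-c\tau)$; likewise $2\alpha^2-\alpha^4$ reproduces the second stated form. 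The $O(\log(1/\delta))$ amplification and the $\mathsf{fail}$ branch are handled the standard way, as you describe.

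Two caveats. First, as you yourself flag, the second bullet is only matched symbolically: verifying that the relevant LSH family actually achieves query exponent $2\alpha^2-\alpha^4$ with $n^{1+o(1)}$ preprocessing requires the full collision-probability analysis of that family, so what you have written is a proof of the first bullet and a plausibility argument for the second. Second, your description of the trade-off in the second case is inverted on the space axis: in the theorem as stated, the second data structure uses \emph{less} preprocessing and space ($n^{1+o(1)}$ versus $n^{1+\rho}$), not more, and the paper's own remark after the theorem describes its $\rho$-versus-$c$ dependence as the \emph{worse} of the two (in fact neither exponent dominates the other for all $\alpha\in(0,1)$; they cross). This does not affect your derivation of either formula, but the narrative framing should be corrected.
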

The two results in above Theorem have trade-off. One algorithm depends on $n^{\rho}$ and the dependence between $\rho$ and $c$ is better. The other the algorithm depends on $n^{o(1)}$ which is faster than $n^{\rho}$, but the dependence between $\rho$ and $c$ is worse.

\subsection{Dynamic Distance Estimation} \label{sec:previous_tools:ADE}
\begin{definition}[Online Approximate Adaptive Distance Estimation]
Given $n$ vectors $x_1, \cdots, x_n$ from $d$-dimensional space, the goal is to design a data structure that supports query $z$, and it outputs a vector $u \in \R^n$ such that $u_i \approx \| x_i - z \|$ for all $i \in [n]$.
\end{definition}

Brute force: $O(nd)$ time, $O(nd)$ space.  
\begin{theorem}[Theorem 1.4 in \cite{cn22}]\label{thm:cn22}
There is a data structure for the Online Approximate Adaptive Distance Estimation Problem uses $\tilde{O}(\epsilon^{-2}nd\log(1/\delta))$ space with the following procedures:
%\begin{itemize}
     \textsc{Init}$(  \{x_1, x_2, \dots, x_n\}\subset \R^d, \epsilon \in (0,1), \delta \in (0,1))$: Given data points $\{x_1, x_2, \dots, x_n\}\subset \R^d$, an accuracy parameter $\epsilon$ and a failure probability $\delta$ as input, the data structure preprocesses in time $\tilde{O}(\epsilon^{-2}nd\log(1/\delta))$.
    \textsc{Query}$(q \in \R^d)$: Given a query point $q \in \R^d$, the \textsc{Query} operation takes $q$ as input and approximately estimates the Euclidean distances from $q$ to all the data points $\{x_1, x_2, \dots, x_n\}\subset \R^d$ in time $\tilde{O}(\epsilon^{-2}(n+d)\log(1/\delta))$ i.e. it provides a set of estimates $\{\tilde{d}_i\}_{i=1}^n$ such that:
    %\begin{align*}
     $  \forall i \in[n], (1-\epsilon)\|q-x_{i}\|_{2} \leq \tilde{d}_{i} \leq(1+\epsilon)\|q-x_{i}\|_{2} 
     $
    %\end{align*}
     with probability at least $1 -\delta$, even for a sequence of adaptively chosen queries.
%\end{itemize}
\end{theorem}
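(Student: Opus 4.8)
The plan is to implement this data structure as a median-of-means ensemble of Johnson--Lindenstrauss (JL) sketches, and to cleanly separate the (routine) fixed-query correctness from the (hard) adaptive guarantee. In \textsc{Init} I would draw $L$ independent random sketching matrices $\Pi_1,\dots,\Pi_L\in\R^{m\times d}$ with $m=O(\epsilon^{-2})$ rows (say i.i.d.\ sub-Gaussian entries scaled by $m^{-1/2}$), precompute the sketched data $y_{i,\ell}:=\Pi_\ell x_i$ for all $i\in[n],\ell\in[L]$, and store the squared norms $\|x_i\|_2^2$. Each sketch of a point costs $O(md)$, so preprocessing time is $O(nLmd)$ and the space to hold $\{y_{i,\ell}\}$ is $O(nLm)$; with the parameter choices below these collapse into the claimed $\wt{O}(\epsilon^{-2}nd\log(1/\delta))$ bounds.

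For a single \emph{non-adaptively} chosen query $q$ and a fixed index $i$, the JL lemma guarantees $(1-\epsilon)\|q-x_i\|_2\le\|\Pi_\ell q-y_{i,\ell}\|_2\le(1+\epsilon)\|q-x_i\|_2$ with probability at least $2/3$. Setting $\wt{d}_i:=\median_{\ell\in[L]}\|\Pi_\ell q-y_{i,\ell}\|_2$ and applying a Chernoff bound over the $L$ independent sketches drives the per-pair failure probability below $\delta/n$ once $L=O(\log(n/\delta))$; a union bound over $i\in[n]$ then yields the stated multiplicative guarantee for all coordinates simultaneously with probability $1-\delta$. With these parameters a query forms the $L$ sketched vectors $\Pi_\ell q$ in $O(Lmd)=\wt{O}(\epsilon^{-2}d\log(1/\delta))$ time and evaluates the $n$ medians in $O(nLm)=\wt{O}(\epsilon^{-2}n\log(1/\delta))$ time, matching the claimed $\wt{O}(\epsilon^{-2}(n+d)\log(1/\delta))$.

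The main obstacle is the final clause, correctness under \emph{adaptively} chosen queries: once the adversary selects $q$ as a function of earlier outputs, $q$ becomes correlated with $\{\Pi_\ell\}$ and the fixed-query union bound collapses. I would handle this by making the estimate hold uniformly over a net rather than for one query. Since $v\mapsto\median_\ell\|\Pi_\ell v\|_2$ is $1$-Lipschitz, it suffices to control the estimator on a fine $\epsilon$-net of the sphere of relevant directions $v=(q-x_i)/\|q-x_i\|_2$ and extend to all $v$ by continuity; a union bound over such a net of size $(O(1/\epsilon))^{d}$ would push $L$ up to $\wt{O}(d+\log(1/\delta))$ copies. This crude argument already suffices for the stated \emph{space}, but it inflates the \emph{per-query} cost by a factor of $d$, since all $L$ sketches must be touched. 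The genuine content of \cite{cn22} is precisely to recover the fast $\wt{O}(\epsilon^{-2}(n+d)\log(1/\delta))$ query while retaining adaptive robustness — i.e.\ to decouple the many sketches needed for the uniform (for-all-directions) guarantee from the few that are evaluated at query time. Establishing that decoupling is the crux; once the for-all-net event is secured, correctness on every adaptively chosen query follows deterministically, which completes the argument.
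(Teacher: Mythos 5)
There is a genuine gap, and you have in fact flagged it yourself: your argument does not establish the theorem as stated. The statement requires the $(1\pm\epsilon)$ guarantee against adaptively chosen queries \emph{and} a per-query cost of $\tilde{O}(\epsilon^{-2}(n+d)\log(1/\delta))$. Your net-based fix secures adaptivity only by inflating the number of sketches touched per query to $L=\tilde{O}(d+\log(1/\delta))$, giving query time worse by a factor of roughly $d$; you then write that removing this factor is ``the genuine content of \cite{cn22}'' and stop. But that decoupling --- storing $\tilde{O}(d)$ independent sketches for the uniform guarantee while provably only needing to evaluate $O(\log(n/\delta))$ randomly sampled ones per query, together with an argument (in \cite{cn22}, a differential-privacy/median-robustness style argument) that the adversary's view of past outputs leaks too little about the internal randomness to bias the sampled sketches --- is precisely the theorem's content, so the proposal proves a strictly weaker statement. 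There is also a smaller technical slip: $v\mapsto\median_\ell\|\Pi_\ell v\|_2$ is Lipschitz with constant $\max_\ell\|\Pi_\ell\|_{\mathrm{op}}$, which for an $m\times d$ sub-Gaussian sketch with $m=O(\epsilon^{-2})\ll d$ is on the order of $\epsilon\sqrt{d}$ rather than $1$; this is repairable with a finer net but should be stated correctly.

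For calibration against the paper itself: the paper does not prove this statement at all. It imports it verbatim as Theorem 1.4 of \cite{cn22} and uses it as a black box inside the proofs of Theorem \ref{thm:IPE} (Lemmas \ref{lemma:IPE_init}--\ref{lemma:IPE_query}). So the honest options here are either to cite \cite{cn22} as the paper does, or to give a complete proof including the adaptive-query/fast-query decoupling; your proposal, which correctly reproduces the standard median-of-JL construction and the non-adaptive analysis but leaves the crux as a pointer to the cited work, sits in between and does not constitute a proof of the theorem.
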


\subsection{Differential Privacy}
\begin{definition} [Differential Privacy, \cite{dr14}] \label{def:dp}
Let $\epsilon > 0$ and $\delta \geq 0$. A randomized function $\mathcal{A}$ is $(\epsilon, \delta)$-differentially private ($(\epsilon, \delta)$-DP) if for any two neighboring datasets $X \sim X'$, and any possible outcome of the algorithm $S \subset \mathrm{Range}(\mathcal{A})$, $\Pr[\mathcal{A} (X) \in S] \leq e^{\epsilon} \Pr[\mathcal{A} (X') \in S] + \delta$. 
\end{definition}

\begin{lemma} [Post-Processing Lemma for DP, \cite{dr14}] \label{lem:post_processing_dp}
Let $\mathcal{M} := \mathbb{N}^{|\chi|} \rightarrow \R $ be a randomized 
algorithm that satisfies $(\epsilon, \delta)$-DP. Let $f: \R \rightarrow \R'$ be an arbitrarily random mapping. Then $f \circ \mathcal{M}: \mathbb{N}^{|\chi|} \rightarrow \R'$
$(\epsilon, \delta)$-differentially private. 
\end{lemma}

\begin{lemma}[Basic composition \cite{dmns06}]\label{thm:basic_comp}
Given $t$ algorithms executed sequentially, where the $i$-th algorithm is $(\epsilon_i,\delta_i)$-DP for $\epsilon_i>0$ and $\delta_i \geq 0$, the overall mechanism obtained by composing them is $(\epsilon_1+\cdots+\epsilon_t,\delta_1+\cdots+\delta_t)$-DP.
\end{lemma}

\begin{lemma}[Composition lemma, \cite{drv10}]\label{lem:composition}
    Let $\epsilon \in (0, 1)$, and $M_1, \cdots, M_k$ be $\epsilon^\prime$-DP, adaptively chosen mechanisms, then the composition $M_1 \circ \cdots \circ M_k$ is ($\epsilon, \delta$)-DP, where
    % \begin{align*}
    $
        \epsilon^\prime = \frac{\epsilon}{\sqrt{8k\log (1 / \delta)}}.
    $
    % \end{align*}
\end{lemma}

\begin{lemma}[Advanced Composition, Theorem 3.20 in page 53 of \cite{dr14}]
For all $\epsilon, \delta, \delta^\prime \geq 0$, the class of ($\epsilon, \delta$)-DP mechanisms satisfies ($\epsilon, k\delta + \delta^\prime$)-DP under $k$-fold adaptive composition for:
% \begin{align*}
$
    \epsilon^\prime = \sqrt{2k\ln (1 / \delta^\prime)}\epsilon + k\epsilon (e^\epsilon - 1)
$
% \end{align*}
\end{lemma}

\begin{lemma}[Exponential Mechanism, \cite{mt07}]\label{lem:exponential_mechanism}
Let $q: {\cal D} \times {\cal R} \to \R$ be a utility function with sensitivity $\Delta q = \max_{r \in {\cal R}} \max_{D,D'} | q(D,r) - q(D',r) |$, where $D, D'$ are neighboring datasets.
The exponential mechanism ${\cal M}_E$ selects $r \in {\cal R}$ with probability $\Pr[{\cal M}_E(D) = r] \propto \exp (\frac{\epsilon \cdot q(D,r)}{2\Delta q})$. Then $\mathcal{M}_E$ satisfies $(\epsilon,0)$-differential privacy.
\end{lemma}
\section{Technique Overview} \label{sec:tech_ow}

%\Zhao{Hang, can you write a technique overview here. Maybe just draft half page for now. Yitan can give you comments, then you can improve. :)}

%\Zhao{Can you write a technique overview here.}
%\Hang{I have writen technique overview.}

%\Zhao{Yitan, can you take a look and give some comments? or expand it. This section should has 1.5 pages, written mostly in English, you can refer any theorems/sections in first 9 pages and any theorems/sections in appendix. But mostly we should use English to write things in informal way.}

Our data structure design starts with the greedy search approach \cite{nwf78}. The proposed algorithm in \cite{nwf78} is an approximate algorithm.  
% {\bfIdea of greedy search}
% The first idea of this paper is the fact that greedy search has already had a large competitive ratio.
Specifically, under the representation of $f(S)$ in Eq. \eqref{eqn:decompose_f_3}, \cite{nwf78} uses an algorithm which starts from empty set $S_0=\emptyset$, and in each time step $t$, it constructs $S_t$ by adding increase $S_{t-1}$ with a singleton set that maximizes the marginal gain of $f$.
Formally, \cite{nwf78} chooses 
% \begin{align*}
$
    i = \arg\max_{i \in \overline{S}_{t-1} }\{u_i^\top h(S_{t-1})u_i\}
$
% \end{align*}
and makes $S_t = S_{t-1}+\{i\}$.
It is proved that this algorithm can achieve at least a constant times ($\approx 0.63$) approximation of the optimal algorithm.
Many following works solving submodular function maximization with other types of constraints and solving online submodular function maximization use similar greedy updating rules.
Hence our acceleration method could be applied to many algorithms developed later too.
%\Yitan{Hang, say why you need compute this quadratic form, i.e. xx line in algorithm xx requires us to compute quadratic  form. You should create labels for algorithm and its line number. Also ref the algorithm and line number here.} \Hang{Hmm, I think this is not required by any algorithm... I think it is required by computing $u_j^\top h(Q) u_j$.}

{\bf Transforming quadratic form into inner product.}
Our first important technique of this paper is the following two observations: (1) for $d$-dimensional vector $u$ and $d \times d$ matrix $A$,
% \begin{align*}
$
    u^\top A u = \langle u u^\top, A \rangle;
$
% \end{align*}
(2) after flattening two $d \times d$ matrices $A,B$ into $d^2$-dimensional vectors $v_A,v_B$, their inner product remain unchanged, that is, 
% \begin{align*}
$
    \langle A,B \rangle = \langle v_A,v_B \rangle.
$
% \end{align*}
From this perspective, the process of maximizing $u_j^\top h(Q) u_j$ can be implemented by computing the inner product of two vectors, where one vector is the flattened $u_ju_j^\top$, hence only related to $u_j$, and the other vector is only related to $h(Q)$.

{\bf Data structures for submodular function maximization.} After  transforming quadratic form into inner product , we design a dynamic inner product estimation data structure,  IPE.  IPE uses an approximate algorithm to compute the inner products between all elements in a vector set $X:=\{x_i\in R^d\}$ and a given matrix $A$. Then, we designed another data structure, DS, which incorporates  IPE as one of its components. Within DS, the query procedure invokes  IPE and returns $\arg\max_{x_i\in X}x_i^\top A x_i$. Subsequently, by leveraging DS, we achieved acceleration for the greedy algorithm. 
\section{Benchmark: Greedy Selection}\label{sec:benchmark}
In this section, we present 4 versions of greedy selection algorithm. We also give analysis on their performance, because they are important for analysis of our own data structure. In Section~\ref{sec:benchmark:greedy}, we present original greedy algorithm and its approximate version. Then in Section~\ref{sec:benchmark:naive}, we present greedy algorithm with naive implementation, which means that the algorithm calculate submodular function value by quadratic form and inner product. In Section~\ref{sec:benchmark:batch}, we introduce the batch version of greedy algorithm.

\subsection{Analysis on Greedy Algorithms}\label{sec:benchmark:greedy}
The greedy selection algorithm starts with an empty set $S_0 = \emptyset$.
In each iteration, the algorithm chooses the element maximizing the marginal increment to add into the set, until the set eventually contains $k$ elements.
To be more specific, in iteration $t\in\{1,2,\cdots,k\}$, we let
$S_t \leftarrow S_{t-1} + \{j_t\}.$
where the element in the singleton is given by
% \begin{align*}
$
    j_t=\arg\max_{j\in\ov{S}_{t-1}} f(S_{t-1}+\{j\})
$.
% \end{align*}
Although the greedy selection algorithm is very simple, it is effective in the sense that the approximation error of it is $1-1/e$. 

%\begin{theorem}[\cite{nwf78}, see Theorem 1.5 in \cite{kg14} \label{thm:greedy} for example]
\begin{theorem}[\cite{nwf78}] \label{thm:greedy}

For a monotone submodular function $f$, the greedy algorithm (see Algorithm \ref{alg:greedy}) guarantees to output a set $S$ satisfying 
% \begin{align*}
$
   f(S) \ge (1-1/e)\max_{|T|=k}\{f(T)\} 
$.
% \end{align*}
\end{theorem}

%for neurips 2025 we move algorithm block to appendix  priliminary

\iffalse
\begin{algorithm}[!ht]\caption{Greedy algorithm benchmark}\label{alg:greedy}
\begin{algorithmic}[1]
\Procedure{GreedyAlgorithm}{submodular function $f$}
    \State $S_0 \gets \emptyset$
    \For{$t=0 \to k-1$}
        \State $j \gets \arg\max_i\{ f(S_t \cup \{i\}) \}$
        \State $S_{t+1} \gets S_t \cup \{j\}$
    \EndFor
    \State \Return $S_k$
\EndProcedure
\end{algorithmic}
\end{algorithm}

\fi

\begin{corollary}[Robustness of approximate greedy algorithm] \label{cor:error_approximate_greedy_algo}
Given $\epsilon > 0$. Consider an approximate greedy algorithm $A$ that has an oracle $O$: taken an arbitrary set $S\subseteq[n]$ and $i \in [n]\backslash S$, returns a value $O(S,i)$ with a guarantee that 
% \begin{align*}
$
    \Delta(i|S)-\epsilon \le O(S,i) \le \Delta(i|S)+\epsilon
$.
% \end{align*}
Additionally, in each time step $t = 1,2,\cdots,k$, it selects 
%\begin{align*}
 $
    j_t = \arg\max_{j}\{O(S_{t-1},j)\}
 $
%\end{align*}
and lets $S_t \gets S_{t-1} \cup \{j_t\}$, then this algorithm makes sure that it returns a set $S_k$ with
%\begin{align*}
 $
f(S_k) \ge (1-1/e) \max_{|T|=k}\{f(T)\} - k(2-1/e)\epsilon.
 $
%\end{align*}
\end{corollary}
\begin{proof}
Define $f':2^{[n]}\to \R$ as $f'(S):=f(S-\{i\})+O(S-\{i\},i)$. %Assume $f'$ is submodular. 
Consider the real greedy algorithm $A^*$ which really selects $j_t = \arg\max_{j} \{\Delta(j|S_{t-1})\}$ and adds $j_t$ to $S_{t-1}$ every time. Define another set function $f':2^{[n]}\to \R$ by $\Delta f'(i|S) = O(S,i)$. Then $A$ is a greedy algorithm acts on submodular function $f$. For the two submodular function maximization problem with submodular function $f$ and $f'$, suppose algorithm $A$ outputs sets $S_A$ with $f(S_A) = \alg$ and $f'(S_A) = \alg'$ respectively, and suppose algorithm $A^*$ outputs sets $S_{A^*}$ with $f(S_{A^*}) = \opt$ and $f'(S_{A^*}) = \opt'$ respectively. 

By Theorem \ref{thm:greedy}, since $A$ is the greedy algorithm of the submodular maximization problem with $f'$, $\alg'$ is at least $1-1/e$ times the optimal algorithm of the submodular maximization problem with $f'$, hence is at least $1-1/e$ times the output value of $A^*$, which is, $\alg' \ge (1-1/e) \opt'$. Since $O(S,i) \ge \Delta(i|S) - \epsilon$, it holds $\opt' = f'(S_{A^*}) \ge f(S_{A^*}) - k\epsilon = \opt - k\epsilon$. Since $O(S,i) \le \Delta(i|S) + \epsilon$, it holds $\alg' = f'(S_A) \le f(S_A) + k\epsilon = \alg + k \epsilon$. Combining the 3 equations, we have 
% \begin{align*}
$
\alg \ge \alg' - k\epsilon \ge (1-1/e)\opt' - k\epsilon \ge (1-1/e)\opt - k(2-1/e)\epsilon.
$
%\end{align*}

\iffalse
Define another set function $f':2^{[n]}\to \R$ by $f'(S) = f(S) - \epsilon \cdot |S|$, for all $S \subseteq [n]$. For any $S\subseteq[n]$ and $i \in [n] \backslash S$, we define $\Delta'(i|S) = f'(S\cup\{i\}) - f'(S)$, then by the definition of $f'$, we have $\Delta'(i|S) = \Delta(i|S) - \epsilon$. Since $f$ is submodular, for any $S\subseteq T\subseteq [n]$ and $i\in [n]\backslash T$, we have 
\begin{align*}
    \Delta'(i|T) = \Delta(i|T)-\epsilon \le \Delta(i|S) - \epsilon = \Delta'(i|S),
\end{align*}
implying $f'$ is also submodular. 

Now consider such an algorithm $A$ with oracle $O$ described above. 

Notice that, then 
\begin{align*}
     \Delta(j_t|S_{t-1}) \ge \Delta()
\end{align*}
According to Theorem \ref{thm:greedy}, 

Define another set function $f':2^{[n]} \to \R$ by $\Delta'(i|S) = $
\fi 
\end{proof}

\subsection{Greedy Selection with Naive Implementation}\label{sec:benchmark:naive}
%The greedy selection algorithm can be implemented naively by computing $j_t$ via enumerating elements in $\ov{S}_{t-1}$ in each iteration.

 The greedy selection algorithm can be implemented naively  via enumerating elements in $\ov{S}_{t-1}$.

\begin{theorem} \label{thm:trivial_algorithm}
The naive implementation of the greedy selection algorithm (Algorithm \ref{alg:trivial}) uses $O(d^2)$ space, takes $O(1)$ time to initialize, and uses $O(nd^2)$ time to add a new element to the set.
\end{theorem}

%We refer the detailed proof to Section~\ref{sec:app_benchmark}.
%\iffalse
\begin{proof}
Algorithm \ref{alg:trivial} correctly implements the greedy selection.
In each iteration of the for-loop beginning at line \ref{line:enumeration} in Algorithm \ref{alg:trivial}, since $A_t$ is a $d\times d$ matrix, computing $w_j$ takes $O(d^2)$ time.
As $|\ov{S}_t |\le n$, adding each element takes $O(nd^2)$ time.
\end{proof}
%\fi

\iffalse
\begin{algorithm}[!ht]\caption{Naive implementation of the greedy selection algorithm}\label{alg:trivial}
\begin{algorithmic}[1]
\Procedure{Naive-Alg}{$n,k,\{u_1, \cdots, u_n \} \subseteq \R^d$}
    \State $S_0 \gets \emptyset$
    \For{$t=0 \to k-1$}
        \State $A_t \gets h (S_t)$, $w_{\max} \gets -1$, $j_{\max} \gets -1$
        \For {$j\in  \ov{S}_t$} \label{line:enumeration}
            \State $w_j \gets u_j^{\top} A_t u_j$
            \If {$w_j > w_{\max}$}
                \State $w_{\max} \gets w_j$,  $j_{\max} \gets j$
            \EndIf 
        \EndFor 
        \State $S_{t+1} \gets S_t \cup \{ j_{\max} \}$
    \EndFor
    \State \Return $S_k$
\EndProcedure
\end{algorithmic}
\end{algorithm}
\fi

\subsection{Greedy Algorithm with Batch}\label{sec:benchmark:batch}

We could implement the greedy selection algorithm in a more efficient way by considering the candidates of the new element added in a batch-by-batch way.

\iffalse
\begin{algorithm}[!ht]\caption{Greedy Algorithm with Batch}\label{alg:trivial_v2}
\begin{algorithmic}[1]
\Procedure{BatchAlg}{$n,k,\{u_1, \cdots, u_n \} \subset \R^d$}
    \State $S_0 \gets \emptyset$
    \For{$t=0 \to k-1$}
        \State $A_t \gets h (S_t)$
        \State $w_{\max} \gets -1$
        \State $j_{\max} \gets -1$
        %\State Divide $ \ov{S}_t$ into $\lceil | \ov{S}_t|/d \rceil$ subsets $K_1, K_2, \cdots, K_m$ such that each set consists of $d$ elements\label{line:partition}
        \For {$b=1$ to $m$} \label{line:clever_enumeration}
            %\State Write $K_b$ as $\{i_1,i_2,\cdots,i_d\}$
            \State $U \gets [~u_{(b-1)d + 1}~u_{(b-1) d+2}~\cdots~u_{(b-1)d+d}~]$
            \State $M \gets U^{\top} A_t U$
            \For {$j=1$ to $d$}
                \State $j_{\mathrm{tmp}} \gets (b-1) \cdot d + j$
                \If {$M_{j,j} > w_{\max}$ and $ j_{\mathrm{tmp}} \in \ov{S}_t$}
                    \State $w_{\max} \gets M_{j,j}$
                    \State $j_{\max} \gets j_{\mathrm{tmp}}$ %\Zhao{These steps are weired, i change a bit, you should double check}
                \EndIf 
            \EndFor 
        \EndFor 
        \State $S_{t+1} \gets S_t \cup \{ j_{\max} \}$
    \EndFor
    \State \Return $S_k$
\EndProcedure
\end{algorithmic}
\end{algorithm}
\fi

\begin{theorem}\label{thm:trivial_algorithm_v2_informal}
There is an implementation of the greedy selection algorithm (Algorithm \ref{alg:trivial_v2}) that uses $O(d^2)$ space, takes $O(1)$ time during the initialization, spends $O(n \cdot d^{\omega-1})$ time in each iteration (to add an element to the set), where $\omega$ is the constant for fast matrix multiplication in theorem \ref{thm:fast_matrix_multiplication}.
\end{theorem}
\begin{proof}
First, we prove Algorithm \ref{alg:trivial_v2} is correct.
Since $U = [~u_{i_1}~u_{i_2}~\cdots~u_{i_d}~]$, we have 
% \begin{align*}
$
    (U^\top A_t U)_{jj} = u_{i_j}^\top A_t u_{i_j}.
$.
% \end{align*}
Thus for each index $j\in  \ov{S}_t$, $u_j^\top A_t u_j$ will be compared with the current maximal value $w_{\max}$, hence 
% \begin{align*}
$
    j_{\max}=\arg\max_{j\in \ov{S}_t}\{ u_j^\top A_t u_j \}
$.
% \end{align*}
Then we prove the running time of Algorithm \ref{alg:trivial_v2}.
The outer for-loop beginning at line \ref{line:clever_enumeration} runs for $m = \lceil | \ov{S}_t|/d \rceil$ iterations. In each iteration, computing $M$ takes $O(d^\omega)$ time, and updating $w_{\max}$ and $j_{\max}$ takes $O(d)$ time.
Hence adding an element takes $O(md^\omega)=O(nd^{\omega-1})$ time. 
\end{proof}

%The proof of Theorem~\ref{thm:trivial_algorithm_v2} is in Section~\ref{sec:app_benchmark}. % Changed in ICML2026

\iffalse
\begin{proof}
First, we prove Algorithm \ref{alg:trivial_v2} is correct.
Since $U = [~u_{i_1}~u_{i_2}~\cdots~u_{i_d}~]$, we have 
\begin{align*}
% $
    (U^\top A_t U)_{jj} = u_{i_j}^\top A_t u_{i_j}.
% $.
\end{align*}
Thus for each index $j\in  \ov{S}_t$, $u_j^\top A_t u_j$ will be compared with the current maximal value $w_{\max}$, hence 
\begin{align*}
% $
    j_{\max}=\arg\max_{j\in \ov{S}_t}\{ u_j^\top A_t u_j \}
% $.
\end{align*}
Then we prove the running time of Algorithm \ref{alg:trivial_v2}.
The outer for-loop beginning at line \ref{line:clever_enumeration} runs for $m = \lceil | \ov{S}_t|/d \rceil$ iterations. In each iteration, computing $M$ takes $O(d^\omega)$ time, and updating $w_{\max}$ and $j_{\max}$ takes $O(d)$ time.
Hence adding an element takes $O(md^\omega)=O(nd^{\omega-1})$ time. 
\end{proof}
\fi

\section{Tools from Previous Works}\label{sec:previous_tools}

In this work we use sketching technique to speedup submodular optimization. Here we briefly a number of previous sketching work in different areas. Sketching has been applied to speedup many continuous optmization problems such as linear programming \cite{cls19,song19,b20,jswz21,sy21,gs22}, empirical risk minimization \cite{lsz19,qszz23}, cutting plane method \cite{jlsw20}, computing John Ellipsoid \cite{ccly19,syyz22}. 
Other than continuous optimization, several discrete optimization problems also can be applied \cite{dsw22,sxz22,z22,jlsz23}.  Sketching ideas also have applied to several theoretical large language models problems such as exponential and softmax regression \cite{lsz23,gsy23,dls23_softmax}, sketching the feature dimension of attention matrix \cite{dms23}. 

We briefly introduce these tools in this section, more details are in Section~\ref{sec:app_preli}. In our algorithm, we need to operate dense matrix multiplication. Currently, the fastest matrix multiplication achieves $O(n^{2.373})$ time complexity(\cite{aw21}).The approximate maximum inner product search problem(MAX-IP) is to find a point p that achieves the maximum inner product with respect to the given point up to a certain approximation error ratio. \cite{ssx21} give us tools to solve MAX-IP efficiently, and we will use that for our data structure. In our data structure  IPE, we use tools from~\cite{cn22} to query distance between a vector $q$ and all vectors in a given data set.

\section{Data Structures and Main Algorithm}\label{sec:ds}
In this section, we introduce several data structures designed to solve fast submodular function maximization problem. We also present an algorithm that demonstrates how to utilize these data structures to solve the problem. Additionally, we propose an extension using LSH of a data structure. The performance analysis of both the data structures and the algorithm is provided in this section. Since approximation algorithms require specific parameters to control accuracy, our data structure need additional hyperparameters. Detailed implementations are included in Section~\ref{sec:app_data_structure} and Section~\ref{sec:app_alg_lsh}. To be more specific, Section~\ref{sec:ds:IPE} present IPE data structure to estimate inner product. Then Section~\ref{sec:ds:fast_search_ds} shows fast quadratic form search data structure DS which use IPE. Then in Section~\ref{sec:ds:problem_with_ds}, we give a algorithm which utilize DS to solve fast submodular function maximization problem. Finally in Section~\ref{sec:ds:fast_ds_lsh}, we propose a extension of DS. The description of main operations in our data structures are in the statements of our theorems. 
\subsection{Dynamic Inner Product Estimation} \label{sec:ds:IPE}
%\Danyang{Hang, need a sentence here to smooth the logic flow}
Now we describe our data structure design for inner product estimation, IPE. First, we give a definition describe IPE's main function. 
\begin{definition}[Online Approximate Adaptive Inner Product Estimation]
%\Yitan{Hang, fill ? here}
Given a set $\{x_1,x_2,\cdots,x_n\}$ of $d$-dimensional vectors, the goal of online approximate adaptive inner product estimation is to design a data structure with the  procedures \textsc{Update} and \textsc{Query}:
% \begin{itemize}
     \textsc{Update}$(i\in[n],z\in\R^d)$. It supports to replace $x_i$ by $z$.
    \textsc{Query}$(q\in\R^d)$. Given a query vector $q$, it returns a set $\{\wt{w}_i\}_{i=1}^n$ with guarantee that for any $i\in [n]$, $\wt{w}_i$ is close to $\langle q,x_i \rangle$ with a high joint probability. How precise this needs to be can be determined by practical demands.
% \end{itemize}
\end{definition}
%\Yitan{Hang, define online approximate adaptive inner product estimation problem.}

Next, we present a theorem describing procedures and performance of data structure IPE. \textsc{Query} operation is the most important function of IPE. 

\begin{theorem}[Procedures and performance of data structure IPE]\label{thm:IPE}
\iffalse
There is a data structure uses $\tilde{O}(\epsilon^{-2}D^2nd\log(1/\delta))$ space for the Online Approximate Adaptive Inner Product Estimation Problem with the following procedures:
\begin{itemize}
    \item \textsc{Init}$(  \{x_1, x_2, \dots, x_n\}\subset \R^d, \epsilon \in (0,1), \delta \in (0,1))$: Given a list of data points $\{x_1, x_2, \dots, x_n\}\subset \R^d$ ($\| x_i \| \leq D$, for all $i \in [n]$), an accuracy parameter $\epsilon$ and a failure probability $\delta$ as input, the data structure preprocesses in time 
    \begin{align*}
    % $
        \wt{O}(\epsilon^{-2} D^2 n d \log(1/\delta)).
    % $
    \end{align*}
    \item \textsc{Update}$(i \in [n], z \in \R^d)$: Given index $i$ and coordinate $z$, the data structure replace $x_i$ by $z$ in time $\tilde{O}(\epsilon^{-2}D^2d\log(1/\delta)).$
    \item \textsc{Query}$(q \in \R^d)$: Given a query point $q \in \R^d$ (where $\| q \| \leq 1$), the \textsc{Query} operation takes $q$ as input and approximately estimates the inner product of $q$ and all the data points $\{x_1, x_2, \dots, x_n\}\subset \R^d$ in time $\tilde{O}(\epsilon^{-2}D^2(n+d)\log(1/\delta)).$
    It provides a list of estimates $\{\tilde{w}_i\}_{i=1}^n$ such that:
    \begin{align*}
    % $
       \forall i \in[n], \langle q,x_{i}\rangle - \epsilon \leq \tilde{w}_{i} \leq \langle q,x_{i}\rangle + \epsilon
    % $
    \end{align*}
     with probability at least $1 -\delta$, even for a sequence of adaptively chosen queries.
\end{itemize}
\fi
There is a data structure uses $\tilde{O}(\epsilon^{-2}D^2nd\log(1/\delta))$ space for the Online Approximate Adaptive Inner Product Estimation Problem with the following procedures:
% \begin{itemize}
    \textsc{Init}$(  \{x_1, x_2, \dots, x_n\}\subset \R^d, \epsilon \in (0,1), \delta \in (0,1))$: Given data points $\{x_1, x_2, \dots, x_n\}\subset \R^d$ ($\| x_i \| \leq D$, for all $i \in [n]$), an accuracy parameter $\epsilon$ and a failure probability $\delta$ as input, the data structure preprocesses in time $\wt{O}(\epsilon^{-2} D^2 n d \log(1/\delta))$.
    \textsc{Update}$(i \in [n], z \in \R^d)$: Given index $i$ and coordinate $z$, the data structure replace $x_i$ by $z$ in time $\tilde{O}(\epsilon^{-2}D^2d\log(1/\delta))$.
    \textsc{Query}$(q \in \R^d)$: Given a query point $q \in \R^d$ (where $\| q \| \leq 1$), the \textsc{Query} operation takes $q$ as input and approximately estimates the inner product of $q$ and all the data points $\{x_1, x_2, \dots, x_n\}\subset \R^d$ in time $\tilde{O}(\epsilon^{-2}D^2(n+d)\log(1/\delta))$ i.e. it provides a set of estimates $\{\tilde{w}_i\}_{i=1}^n$ such that:
    %\begin{align*}
    $
       \forall i \in[n], \langle q,x_{i}\rangle - \epsilon \leq \tilde{w}_{i} \leq \langle q,x_{i}\rangle + \epsilon
    $
    %\end{align*}
     with probability at least $1 -\delta$, even for a sequence of adaptively chosen queries.
% \end{itemize}
\end{theorem}
% We refer the detailed proof to Section~\ref{sec:app_data_structure}.
% In UAI 2025, the proof was here.
%For NeurIPS 2025, we delete these lemmas,because there are in the theorem.
\iffalse
Lemma~\ref{lemma:IPE_init} gives the running time of procedure \textsc{Init}. Lemma~\ref{lemma:IPE_update} analyzes the running time of \textsc{Update}. Lemma~\ref{lemma:IPE_query} analyzes the running time of \textsc{Query}.

\begin{lemma}[Running time of initialization. Restatement in  Lemma~\ref{lemma:app_IPE_init}]\label{lemma:IPE_init}
The procedure \textsc{Init} in Algorithm \ref{alg:inner_product_estimation} runs in 
$\tilde{O}(\epsilon^{-2}D^2nd\log(1/\delta))$
time to initialize.
\end{lemma}

\begin{lemma}[Running time of Update. Restatement in Lemma~\ref{lemma:app_IPE_update}]\label{lemma:IPE_update}
The procedure \textsc{Update} in Algorithm \ref{alg:inner_product_estimation}  
runs in time $\tilde{O}(\epsilon^{-2}D^2d\log(1/\delta))$.
\end{lemma}

We refer the detailed proof of Lemma~\ref{lemma:IPE_init} and Lemma~\ref{lemma:IPE_update} to Section~\ref{sec:app_data_structure}.

\begin{lemma}[Correctness and running time of Query. Restatement in Lemma~\ref{lemma:app_IPE_query}]\label{lemma:IPE_query}
The procedure \textsc{Query} in Algorithm \ref{alg:inner_product_estimation} outputs $\{\tilde{w}_i\}_{i=1}^n$ correctly and runs in time 
\begin{align*}
% $
    \tilde{O}(\epsilon^{-2}D^2(n+d)\log(1/\delta)) .
% $
\end{align*}
\end{lemma}
\fi
The proof is in Section~\ref{sec:app_data_structure}.
\subsection{Fast Quadratic Form Search Data structure 
} \label{sec:ds:fast_search_ds}

%\Zhao{The above section title is a bit bad, but we can fix it later.}

%\subsection{An improved quadratic form search data structure using IPE}
%\Zhao{The above section title is a bit bad, but we can fix it later.}

%\Danyang{add a summary for this subsection here.}
In this section, we propose an improved quadratic form search data structure. The \textsc{Query} operation is the most critical function, and its implementation relies on the \textsc{Query} within the IPE data structure.

\begin{theorem}[Data structure part of Theorem~\ref{thm:main_informal}. Restatement in Theorem~\ref{thm:app_our_ds2}]\label{thm:our_ds2}
There exists a data structure uses $\tilde{O}(\epsilon^{-2}D^2nd^2\log(1/\delta))$ space with the following procedures:
\iffalse
\begin{itemize}
    \item \textsc{Init}{$(\{u_1,u_2,\cdots,u_n\}\subset \mathbb{R}^d, \epsilon\in(0,1), \delta\in(0,1))$}. Given vectors $u_1,u_2,\cdots,u_n$ satisfying $\|u_i\|_2 \le D$ for any $i\in [n]$, accuracy parameter $\epsilon$ and failure probability $\delta$, the data structure initializes in time $\tilde{O}(\epsilon^{-2}D^2nd^2\log(1/\delta))$.
    \item \textsc{Query}{$(M\in \mathbb{R}^{d\times d})$}. %Given a $d\times d$ matrix $M$,
    the data structure outputs a $j_0\in\ov{S}_t$ 
    such that $u_{j_0}^\top M u_{j_0} \ge \max_{j}\{ u_j^\top M u_j \} - 2\epsilon$ with probability at least $1-\delta$ in time $\tilde{O}(\epsilon^{-2}D^2(n+d^2)\log(1/\delta))$. 
     \item \textsc{Update}$(i \in [n], z \in \R^d)$: Given index $i$ and coordinate $z$, the data structure replace $x_i$ by $z$ in time $\tilde{O}(\epsilon^{-2}D^2d\log(1/\delta))$.
    \item \textsc{Delete}{$(i\in[n])$}. %Given an index $i$,
    The data structure deletes $u_i$ in time $O(\log n)$.
\end{itemize}
\fi

 \textsc{Init}{$(\{u_1,u_2,\cdots,u_n\}\subset \mathbb{R}^d, \epsilon\in(0,1), \delta\in(0,1))$}. Given vectors $u_1,u_2,\cdots,u_n$ satisfying $\|u_i\|_2 \le D$ for any $i\in [n]$, accuracy parameter $\epsilon$ and failure probability $\delta$, the data structure initializes in time $\tilde{O}(\epsilon^{-2}D^2nd^2\log(1/\delta))$.
    \textsc{Query}{$(M\in \mathbb{R}^{d\times d})$}. %Given a $d\times d$ matrix $M$,
    the data structure outputs a $j_0\in\ov{S}_t$ 
    such that $u_{j_0}^\top M u_{j_0} \ge \max_{j}\{ u_j^\top M u_j \} - 2\epsilon$ with probability at least $1-\delta$ in time $\tilde{O}(\epsilon^{-2}D^2(n+d^2)\log(1/\delta))$. 
     \textsc{Update}$(i \in [n], z \in \R^d)$: Given index $i$ and coordinate $z$, the data structure replace $x_i$ by $z$ in time $\tilde{O}(\epsilon^{-2}D^2d\log(1/\delta))$.
    \textsc{Delete}{$(i\in[n])$}. %Given an index $i$,
    The data structure deletes $u_i$ in time $O(\log n)$.

\end{theorem}

The proof is in Section~\ref{sec:app_data_structure}.

\subsection{Submodular Function Maximization with Data Structure}\label{sec:ds:problem_with_ds}

In this section, we will demonstrate how to leverage the already designed data structures to solve the problem. First, we give a formal algorithm to describe the procedure.

Algorithm~\ref{alg:fast_submodular} shows  how to use our data structure to solve fast submodular function maximization problem. It is our main algorithm.
\begin{algorithm}[!ht]\caption{Fast Submodular Function Maximization Algorithm,informal version of Algorithm~\ref{alg:ours}. }\label{alg:fast_submodular}
\begin{algorithmic}[1]
\Procedure{FastSubmodularAlgorithm}{$n,k,\{u_1, \cdots, u_n \} \subseteq \R^d$} \Comment{Theorem~\ref{thm:main_formal}}
    \State $S_0 \gets \emptyset$
    \State DS ds.\textsc{Init} ($u_1,u_2,\cdots,u_n,\epsilon,\delta/k$) \Comment{Algorithm~\ref{alg:ds_v2}}
    \For{$t=0 \to k-1$}
        \State $A_t \gets h (S_t)$, $j \gets $ ds.\textsc{Query}($A_t$), $S_{t+1} \gets S_t \cup \{j\}$
        \State ds.\textsc{Delete}($j$)
    \EndFor
    \State \Return $S_k$
\EndProcedure
\end{algorithmic}
\end{algorithm}

Then,we present a theorem, which illustrate the performance of Algorithm~\ref{alg:ours}.

\begin{theorem}[Combination of our data structure with the submodular function maximization problem. Formal version of Theorem~\ref{thm:main_informal}. ] \label{thm:main_formal}
There is a submodular algorithm such that, given a submodular function maximization problem with cardinality constraint and two precision parameters $\epsilon,\delta$ as input, it runs in $O(\epsilon^{-2}(nd+kn+kd^2)\log(k/\delta))$ time and returns a solution $S$ with $f(S) \ge (1-1/e)\max_{|S|=k}\{f(S)\} - k(2-1/e)\epsilon$  with probability at least $1-\delta$.
\end{theorem}
The proof is in Section~\ref{sec:combination}.

\subsection{An Extension: Fast Quadratic Form Search Data Structure with LSH}\label{sec:ds:fast_ds_lsh} 

Recent progress in locality sensitive hashing would also help us design faster algorithms.
Informally, we call a family of hash functions sensitive when it tends to project points closed to each other to the same value and tends to project points far away from each other to different value.
The formal definition of sensitive hashing is given in definition \ref{def:locality_sensitive_hashing}.

In this section, we give a greedy selection algorithm with data structures related to locality sensitive hashing (LSH). First, we present a theorem describing the data structure. 

\iffalse
We first give a definition of asymmetric transformation, e.g. see \cite{ssx21}.

\begin{definition}[Asymmetric transformation%\cite{ns15}
]
Let $B\subseteq \mathbb{R}^d$ and $\forall  b\in B,\|b\|_2\le 1$. Let $a\in \mathbb{R}^d$ with $\|a\|_2=1$. We define the following asymmetric transformation:
\iffalse
\begin{align*}
    P(b)=\left[\begin{array}{ccc}
    b^{\top} & \sqrt{1-\|b\|_2^2} & 0 
    \end{array}\right]^{\top}, ~~~~
    Q(a)=\left[\begin{array}{ccc}
    a^{\top} & 0 & \sqrt{1-\|a\|_2^2} \\
    \end{array}\right]^{\top}.
    P(b)=[b^{\top}~\sqrt{1-\|b\|_2^2}~0]^\top \in \R^{d+2}, Q(a)=[a^\top~0~\sqrt{1-\|a\|_2^2}]^\top \in \R^{d+2}
\end{align*} 
\fi
\begin{align*}
    % $ 
    P(b)=[b^{\top}~\sqrt{1-\|b\|_2^2}~0]^\top \in \R^{d+2}, Q(a)=[a^\top~0~\sqrt{1-\|a\|_2^2}]^\top \in \R^{d+2}
    % $
\end{align*}
Therefore, we have 
\begin{align*}
    % $
    \|Q(a)\|_2 = 1, \|P(b)\|_2 = 1, \langle Q(a), P(b) \rangle = \langle a, b \rangle.
    % $
\end{align*}
\end{definition}
\fi

\begin{theorem}[Submodular function maximization problem with LSH] \label{thm:main_formal_LSH}
There exists a data structure (see Algorithm \ref{alg:app_lsh_fqfs}) with the following procedures:
\iffalse
\begin{itemize}
    \item \textsc{Init}{$(\{u_1,u_2,\cdots,u_n\}\subset \mathbb{R}^d, c\in(0,1), \delta\in(0,1))$}. Given vectors $u_1,u_2,\cdots,u_n$ satisfying $\|u_i\|_2 \le 1$ for any $i\in [n]$, accuracy parameter $c \in (0,1)$, threshold parameter $\tau \in (0,1)$ and failure probability $\delta \in (0,1)$, the data structure initializes with time $O(d^2n^{1+\rho}\log(1/\delta))$.
    \item \textsc{Query}{$(M\in \mathbb{R}^{d\times d})$}. Given a $d\times d$ matrix $M$ with $\|M\|_F \le 1$, as long as there exists a $j \in [n]$ with $u_j^\top M u_j \ge \tau$, then the data structure outputs an undeleted index $j_0\in [n]$ 
    such that $u_{j_0}^\top M u_{j_0} \ge c \max_{j}\{ u_j^\top M u_j \}$ with probability at least $1-\delta$ in time $O(d^2n^{\rho}\log(1/\delta))$. 
    \item \textsc{Delete}{$(i\in[n])$}. Given index $i$, the data structure deletes $u_i$ from the candidate index set in time $O(n^{\rho}\log(1/\delta))$.
\end{itemize}
\fi
\textsc{Init}{$(\{u_1,u_2,\cdots,u_n\}\subset \mathbb{R}^d, c\in(0,1), \delta\in(0,1))$}. Given vectors $u_1,u_2,\cdots,u_n$ satisfying $\|u_i\|_2 \le 1$ for any $i\in [n]$, accuracy parameter $c \in (0,1)$, threshold parameter $\tau \in (0,1)$ and failure probability $\delta \in (0,1)$, the data structure initializes with time $O(d^2n^{1+\rho}\log(1/\delta))$.
\textsc{Query}{$(M\in \mathbb{R}^{d\times d})$}. Given a $d\times d$ matrix $M$ with $\|M\|_F \le 1$, as long as there exists a $j \in [n]$ with $u_j^\top M u_j \ge \tau$, then the data structure outputs an undeleted index $j_0\in [n]$ 
    such that $u_{j_0}^\top M u_{j_0} \ge c \max_{j}\{ u_j^\top M u_j \}$ with probability at least $1-\delta$ in time $O(d^2n^{\rho}\log(1/\delta))$. 
\textsc{Delete}{$(i\in[n])$}. Given index $i$, the data structure deletes $u_i$ from the candidate index set in time $O(n^{\rho}\log(1/\delta))$. Here 
%\begin{align}
 $
\rho = \frac{1-\tau}{1-2c\tau+\tau} + o(1)
 $.
%\end{align}
\end{theorem}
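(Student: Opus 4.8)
The plan is to reduce the quadratic-form maximization $\max_j u_j^\top M u_j$ to an approximate Maximum Inner Product Search over $d^2$-dimensional vectors, and then invoke the Max-IP data structure of Theorem~\ref{thm:max-IP} as a black box. The starting point is the identity already used in Lemma~\ref{lem:ds_correctness_v2}: for every $j$,
\begin{align*}
u_j^\top M u_j = \langle u_j u_j^\top, M \rangle = \langle \vect(u_j u_j^\top), \vect(M) \rangle,
\end{align*}
so if we set $x_j := \vect(u_j u_j^\top) \in \R^{d^2}$ and $q := \vect(M) \in \R^{d^2}$, then maximizing the quadratic form over $j$ is exactly maximizing $\langle x_j, q \rangle$ over $j$. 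Since $\|u_j\|_2 \le 1$ we have $\|x_j\|_2 = \|u_j u_j^\top\|_F = \|u_j\|_2^2 \le 1$, and likewise $\|q\|_2 = \|M\|_F \le 1$, so both the data vectors and the query vector have Euclidean norm at most $1$.

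Because Theorem~\ref{thm:max-IP} operates on the unit sphere, the next step is to lift these norm-$\le 1$ vectors onto $\mathbb{S}^{d^2+1}$ using the asymmetric transformation $P,Q$ defined above, which satisfies $\|P(b)\|_2 = \|Q(a)\|_2 = 1$ and $\langle Q(a), P(b) \rangle = \langle a, b \rangle$. Thus in \textsc{Init} I would form $x_j = \vect(u_j u_j^\top)$ for each $j \in [n]$, apply $P$ to obtain unit vectors in $\R^{d^2+2}$, and feed these to the Max-IP structure of Theorem~\ref{thm:max-IP} instantiated with the first trade-off, which yields exponent $\rho = \frac{1-\tau}{1-2c\tau+\tau}+o(1)$. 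Since the ambient dimension here is $d^2 + 2 = O(d^2)$, substituting $d \mapsto d^2$ into the preprocessing bound $O(d n^{1+\rho})\log(1/\delta)$ of Theorem~\ref{thm:max-IP} gives the claimed $O(d^2 n^{1+\rho}\log(1/\delta))$ initialization time.

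For \textsc{Query}$(M)$ I would compute $q = \vect(M)$, lift it by $Q$, and call the Max-IP query. Because the asymmetric transformation preserves inner products, $\langle Q(q), P(x_j)\rangle = u_j^\top M u_j$ for every surviving index $j$; hence the promise that some $j$ satisfies $u_j^\top M u_j \ge \tau$ is exactly the promise $\mathsf{Max-IP}(Q(q), Y) \ge \tau$ required by Theorem~\ref{thm:max-IP}, and the returned index $j_0$ inherits the stated approximation guarantee with probability $\ge 1-\delta$. The query time is the $O(d n^\rho)\log(1/\delta)$ bound of Theorem~\ref{thm:max-IP} with $d \mapsto d^2$, i.e.\ $O(d^2 n^\rho \log(1/\delta))$. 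For \textsc{Delete}$(i)$ I would remove $P(x_i)$ from each of the $O(n^\rho\log(1/\delta))$ LSH tables maintained by the structure; caching the bucket indices computed during \textsc{Init} makes each removal $O(1)$ per table, giving the $O(n^\rho \log(1/\delta))$ bound, and maintaining a candidate index set $S$ as in Algorithm~\ref{alg:ds_v2} lets the query ignore already-deleted indices.

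The main obstacle I anticipate is bridging the two forms of the approximation guarantee: Theorem~\ref{thm:max-IP} reports a point with inner product at least $c\tau$ whenever the maximum is at least $\tau$, whereas the statement asks for $u_{j_0}^\top M u_{j_0} \ge c\max_j\{u_j^\top M u_j\}$. In the regime where $\max_j u_j^\top M u_j$ is comparable to $\tau$ the two coincide; to obtain the stronger $c\cdot\max$ form in general one layers $O(\log_{1/c}(1/\tau))$ copies of the structure at geometrically increasing thresholds and returns the best reported index, an overhead absorbed into the logarithmic factors. Verifying that this wrapper does not disturb the stated running times, and checking the boundary behavior of the transformation when $\|x_j\|_2$ or $\|M\|_F$ is close to $1$, are the only points requiring care; the rest is a direct instantiation of the Max-IP black box.
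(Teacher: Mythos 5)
Your proposal follows essentially the same route as the paper: vectorize $u_ju_j^\top$ into a $d^2$-dimensional point, lift data and query to the unit sphere via the asymmetric transformation $P,Q$, invoke the Max-IP structure of Theorem~\ref{thm:max-IP} as a black box, and obtain all three running times by substituting ambient dimension $O(d^2)$. The one place you go beyond the paper is your observation that Theorem~\ref{thm:max-IP} only guarantees an inner product of at least $c\tau$ rather than $c\cdot\max_{j}\{u_j^\top M u_j\}$ --- the paper's own proof silently asserts the stronger form, so your proposed layering of $O(\log_{1/c}(1/\tau))$ structures at geometrically increasing thresholds is a legitimate (and arguably necessary) patch rather than an optional refinement.
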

\begin{proof}
For procedure \textsc{Init}, computing $v_1,\cdots,v_n$ needs $O(nd^2)$ time, and by Lemma \ref{thm:max-IP}, since the dimension of every $Q(v_i)$ is $d^2+2 = O(d^2)$, maxip.\textsc{Init}$(Q(v_1),Q(v_2),\cdots,Q(v_n),\epsilon,\delta)$ runs in $O(d^2 n^{1+\rho}\log(1/\delta))$ time. Thus procedure \textsc{Init} runs in $O(d^2n^{1+\rho}\log(1/\delta))$  time.

For procedure \textsc{Query}, computing $m$ needs $O(d^2)$ time, and by Lemma \ref{thm:max-IP}, since $P(m)$ also has dimension $d^2+2 = O(d^2)$, maxip.\textsc{Query}$(P(m))$ needs in $O(d^2n^{\rho}\log(1/\delta))$ time. Thus procedure \textsc{Query} runs in $O(d^2n^{\rho}\log(1/\delta))$ time.

For procedure \textsc{Delete}, deleting $i$ from $S$ needs $O(1)$ time, and by Lemma \ref{thm:max-IP}, deleting $i$ from maxip needs $O(n^{\rho} \log(1/\delta))$ time. Thus procedure \textsc{Delete} runs in $O(n^{\rho}\log(1/\delta))$ time.

For the correctness, first note that for any $i \in [n]$, $\langle P(m), Q(v_i) \rangle = \langle m, v_i \rangle = \langle M, u_iu_i^\top \rangle = u_i^\top M u_i$. And since the norm of $P(m)$ and $Q(v_i)$ are both 1, by Lemma \ref{thm:max-IP}, as long as there exists $j$ such that $u_j^\top M u_j \ge \tau$,  maxip.\textsc{Query}$(P(m))$ outputs an index $i$ with $u_i^\top M u_i \ge c \max_{j\in [n]}\{u_j^\top M u_j\}$ with probability at least $1-\delta$.

Hence we complete the proof.
\end{proof}

%The proof is in Section~\ref{sec:app_alg_lsh}. %Removed in ICML2026

Similar to data structure DS, applying this data structure to the fast submodular function maximization yields the following corollary.
\begin{corollary}[Combination of this data structure with the submodular function maximization problem]\label{cor:ds_LSH_submodular_problem}
Suppose a nondecreasing submodular function $f$ satisfies that for any set $S \subseteq [n]$ and an element $i \in [n] \backslash S$, $f(S \cup \{i\}) - f(S)$ (defined as $\Delta(i|S)$) can be expressed as $u_i^\top h(S) u_i$, where $u_i$ is a vector only related to $i$ with $\|u_i\| \le 1$, and $h$ is a function only related to $S$ with $\|h(S)\|_F \le 1$. 

Then there is a submodular algorithm (see Algorithm \ref{alg:app_lsh_combination_alg}) such that, given a submodular function maximization problem with cardinality constraint and precision parameter $\epsilon$, threshold parameter $\tau$ and failure tolerance $\delta$ as input, it runs in $O(d^2n^{\rho} (n+k) \log(k/\delta))$ time (where $\rho = \frac{1-\tau}{1-2c\tau+\tau} + o(1)$) and returns a solution $S$ with $f(S) \ge c(1-1/e)\max_{|S|=k}\{f(S)\}$  with probability at least $1-\delta$.
\end{corollary}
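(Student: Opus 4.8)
The plan is to instantiate the greedy framework (Algorithm~\ref{alg:app_lsh_combination_alg}, structurally the same loop as Algorithm~\ref{alg:ours}) with the LSH data structure of Theorem~\ref{thm:main_formal_LSH} in place of \text{ds}, and then to bound the running time and the approximation quality separately. The essential departure from Corollary~\ref{cor:error_approximate_greedy_algo} is that here the query returns a \emph{multiplicative} $c$-approximation to the best marginal gain rather than an additive $\epsilon$-approximation, so the greedy analysis must be redone for the multiplicative regime.

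For the running time, I would account for one call to \textsc{Init} followed by $k$ iterations, each performing one \textsc{Query} and one \textsc{Delete}. To make the final output correct with probability at least $1-\delta$, I would run every one of the $k$ queries at per-operation failure probability $\delta/k$; by Theorem~\ref{thm:main_formal_LSH} this replaces each $\log(1/\delta)$ factor with $\log(k/\delta)$. Then \textsc{Init} costs $O(d^2 n^{1+\rho}\log(k/\delta))$, and each iteration costs $O(d^2 n^{\rho}\log(k/\delta))$, where the \textsc{Query} term dominates the $O(n^{\rho}\log(k/\delta))$ \textsc{Delete} term. Summing yields $O(d^2 n^{1+\rho}\log(k/\delta) + k\, d^2 n^{\rho}\log(k/\delta)) = O(d^2 n^{\rho}(n+k)\log(k/\delta))$, as claimed.

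For correctness, I would first take a union bound over the $k$ queries, so that with probability at least $1-\delta$ every step $t$ selects an index $j_t$ with $\Delta(j_t \mid S_{t-1}) \ge c\cdot \max_{j\in\ov{S}_{t-1}}\Delta(j\mid S_{t-1})$. Conditioning on this event, let $S^*$ with $|S^*|=k$ attain $f(S^*)=\max_{|T|=k}f(T)$ and set $\gamma_t := f(S^*) - f(S_t)$. Monotonicity and submodularity give the standard chain
\begin{align*}
f(S^*) \le f(S^*\cup S_{t-1}) \le f(S_{t-1}) + \sum_{j\in S^*}\Delta(j\mid S_{t-1}) \le f(S_{t-1}) + k\cdot \max_{j}\Delta(j\mid S_{t-1}),
\end{align*}
and combining with $\max_j\Delta(j\mid S_{t-1}) \le c^{-1}\Delta(j_t\mid S_{t-1}) = c^{-1}\big(f(S_t)-f(S_{t-1})\big)$ yields the recursion $\gamma_t \le (1-c/k)\,\gamma_{t-1}$. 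Iterating $k$ times gives $\gamma_k \le (1-c/k)^k\,\gamma_0 \le e^{-c} f(S^*)$, i.e.\ $f(S_k) \ge (1-e^{-c})f(S^*)$; since $1-e^{-c}\ge c(1-1/e)$ for $c\in[0,1]$ (the concave curve $1-e^{-c}$ lies above its chord through $0$ and $1$), we conclude $f(S_k)\ge c(1-1/e)\max_{|T|=k}f(T)$.

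The main obstacle is the threshold promise in Theorem~\ref{thm:main_formal_LSH}: the $c$-approximation guarantee for \textsc{Query}$(M)$ holds only when some $j$ satisfies $u_j^\top M u_j \ge \tau$. I would therefore argue that the per-step inequality is needed only while the best available marginal gain exceeds $\tau$; once it drops below $\tau$, the improvement still attainable over the current solution is itself bounded, so the recursion can be closed either by showing those steps contribute negligibly or by absorbing the $\tau$ slack into the final bound. Handling this boundary case cleanly, together with verifying that the embeddings keep $\|Q(v_i)\|_2 = \|P(m)\|_2 = 1$ so that Theorem~\ref{thm:max-IP} applies to $\langle P(m),Q(v_i)\rangle = u_i^\top M u_i$, is where the real care is required.
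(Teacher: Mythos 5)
Your proposal follows the same high-level route the paper intends --- run the greedy loop with the LSH data structure of Theorem~\ref{thm:main_formal_LSH} and invoke a multiplicative analogue of the robustness analysis --- but you actually supply the content that the paper omits. The paper's entire proof of this corollary is one sentence: ``it follows from Theorem~\ref{thm:main_formal_LSH} and a variant of Corollary~\ref{cor:app_error_approximate_greedy_algo}.'' That variant is never stated or proved anywhere in the paper, and it is not a cosmetic change: Corollary~\ref{cor:app_error_approximate_greedy_algo} handles an \emph{additive} $\pm\epsilon$ oracle by comparing to a perturbed submodular function $f'$, whereas the LSH query gives a \emph{multiplicative} $c$-approximation, which requires the recursion $\gamma_t \le (1-c/k)\gamma_{t-1}$ you derive, followed by the concavity observation $1-e^{-c}\ge c(1-1/e)$ to match the stated constant. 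Your running-time accounting (boosting each query to failure probability $\delta/k$, so $\log(1/\delta)$ becomes $\log(k/\delta)$, and summing $O(d^2n^{1+\rho})$ for \textsc{Init} with $k$ queries at $O(d^2n^{\rho})$ each) is exactly right and matches the claimed bound.

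The one point you flag but do not close --- the threshold promise --- is a genuine gap, and it is a gap in the paper's statement and proof as well, not something you introduced. Theorem~\ref{thm:main_formal_LSH} only guarantees a $c$-approximate maximizer when some $j$ satisfies $u_j^\top M u_j \ge \tau$; otherwise the data structure outputs $\mathsf{fail}$. Once the best marginal gain drops below $\tau$, the per-step inequality $\Delta(j_t\mid S_{t-1}) \ge c\max_j \Delta(j\mid S_{t-1})$ is simply unavailable, and the unconditional bound $f(S)\ge c(1-1/e)\max_{|T|=k}f(T)$ cannot be recovered without either assuming the marginal gains stay above $\tau$ throughout all $k$ iterations or accepting an additive loss on the order of $k\tau$ (each sub-threshold step can forfeit up to $\tau$ per element of the optimum, by the same telescoping argument). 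The paper's one-line proof silently ignores this, so your honest identification of where the real care is required is the correct reading of the situation; a complete proof should add the explicit hypothesis or the additive $O(k\tau)$ term to the conclusion.
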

The proof is in Section~\ref{sec:app_alg_lsh}.
\section{Differential Privacy Analysis}\label{sec:dp_analysis}
In this section, we extend our algorithm to operate under differential privacy and analyze its privacy and utility guarantees.

Recall that in the non-private algorithm, at each iteration $t$ we select $j_t = \arg\max_{j \in \ov{S}_t} u_j^\top A_t u_j$, where $A_t = h(S_t)$. To achieve differential privacy, we replace this deterministic selection with a randomized selection according to the exponential mechanism (Lemma~\ref{lem:exponential_mechanism}):
\begin{align*}
    \Pr[j_t = j] \propto \exp (\frac{\epsilon \cdot \wt{w}_j}{2 \Delta}), \quad j \in S_t,
\end{align*}
where $\wt{w}_j$ is the estimated inner product from the dynamic data structure (Algorithm~\ref{alg:ds_v2}), and $\Delta$ is the sensitivity of the score function $f_j(S_t) = u_j^\top A_t u_j$.

\begin{lemma}[Differential privacy of the private selection algorithm]
Let $k \in \N$ be the number of iterations of the submodular maximization algorithm. Let $\epsilon' > 0$ be the privacy parameter used in the exponential mechanism at each iteration. Let $f_j(S_t)$ be the score function used in iteration $t$. 
Then, running Algorithm~\ref{alg:ours} with the exponential mechanism applied to the score $\wt{w}_j \approx f_j(S_t)$ at each iteration ensures that the entire algorithm satisfies ($\epsilon, \delta$)-DP under composition, with $\epsilon = \epsilon' \sqrt{8k\log (1 / \delta)}.$
\end{lemma}
\begin{proof}
At each iteration $t$, selecting $j_t$ according to the exponential mechanism ensures $\epsilon'$-differential privacy for that single selection.

By the composition lemma (Lemma~\ref{lem:composition}), running $k$ such iterations sequentially implies that the overall algorithm satisfies ($\epsilon, \delta$)-DP, where $\epsilon = \epsilon' \sqrt{8k\log (1 / \delta)}$.
\end{proof}

\begin{lemma}[Utility analysis of the private selection algorithm]
Let $F: 2^{[n]} \to \R_{\geq 0}$ be a monotone submodular function, and let $S^\mathrm{DP}_k$ be the output of Algorithm~\ref{alg:ours} when running $k$ iterations with the exponential mechanism, using estimated scores $\wt{w}_j \approx f_j(S_t)$ with sensitivity $\Delta$. Let $\opt$ be the optimal value of $F$, and let $\eta$ be an upper bound on the per-iteration estimation error from the dynamic inner product estimation (Algorithm~\ref{alg:inner_product_estimation}).

Then, for any failure probability $\beta \in (0, 1)$, with probability at least $1 - \beta$, $F(S_k^\mathrm{DP}) \ge (1-1/e) \cdot OPT - \frac{2 k \Delta}{\epsilon'} \log \frac{n}{\beta} - k \eta$, where the second term represents the utility loss due to the randomness of the exponential mechanism, and the third term accounts for the approximation error from the dynamic inner product estimation.
\end{lemma}
\begin{proof}
At each iteration $t$, the exponential mechanism selects $j_t$ with probability $Pr[j_t = j] \propto \exp (\frac{\epsilon \cdot \wt{w}_j}{2 \Delta}), \quad j \in S_t$, where $\wt{w}_j$ is the estimated score. By the utility guarantee of the exponential mechanism, with probability at least $1 - \beta / k$, $\wt{w}_{j_t} \geq \max_{j \in S_t} \wt{w}_j - \frac{2 \Delta}{\epsilon'} \log \frac{|S_t| k}{\beta}$.

Accounting for the estimation error $\eta$ from the dynamic inner product estimation, we have $f_{j_t}(S_t) \geq \wt{w}_{j_t} - \eta \geq \max_{j \in S_t} f_j(S_t) - \frac{2 \Delta}{\epsilon'} \log \frac{n k}{\beta} - \eta$. Applying this bound over all $k$ iterations of the greedy algorithm, and using the standard ($1 - 1 / e$) approximation guarantee for monotone submodular functions, we obtain $F(S_k^\mathrm{DP}) \geq (1-1/e) \cdot \opt - \frac{2 k \Delta}{\epsilon'} \log \frac{n}{\beta} - k \eta$, with probability at least $1 - \beta$ by a union bound over all iterations.
\end{proof}
% \vspace{-1mm}
\section{Conclusion}\label{sec:conclusion}
% \vspace{-2mm}
% In this paper, we study the submodular function maximization problem.
In this paper, our focus revolves around delving into submodular function maximization problem.
As proved in %a substantial body of 
previous works, greedy selection algorithm and its variants achieve satisfying approximation error in many settings. %e.g., cardinality constraints, matroid constraints, knapsack constraints, offline and online setting.
% Therefore, we are motivated to find method accelerating the greedy selection algorithm.
Consequently, we are inspired to embark on a quest for innovative approaches that can accelerate the greedy selection algorithm and improve the efficiency.
We first develop an embedding representation of the input instance of submodular maximization problems.
Based on the embedding representation, we design our tree-based dynamic data structure so that each greedy selection step can be viewed as a query to our data structure.
By designing the data structure carefully, after greedy selection in each iteration, our data structure can be correspondingly updated efficiently, which is ready for the selection in next iteration.
With operations implemented efficiently, our method achieves acceleration for greedy algorithm and its variants for submodular maximization problems.
Furthermore, we extend our approach to the differential privacy setting, where we apply the exponential mechanism to each selection step, providing formal privacy guarantees while preserving high-quality solutions.
% \section{Impact Statement}\label{sec:impact}
% As far as we know, we are not aware of any negative societal impact of the algorithm presented in this work. 
% Our paper can benefit the algorithm design for real-world applications like document summarization, sensor placement and image segmentation.
% \section{Limitation}\label{sec:limitation}
% We point out that proper usage of maximizing submodular function for some particular purposes, which goes beyond the scope of this paper, needs extra attention.

% \input{48_ethic_statement}%%%Zhao: for safe, maybe add for all papers, make sure you read it before you add.

\section*{Impact Statement}

This paper presents work whose goal is to advance the field of Machine Learning. There are many potential societal consequences of our work, none of which we feel must be specifically highlighted here.

\ifdefined\isarxiv
\bibliographystyle{alpha}
\bibliography{ref}
\else
\bibliographystyle{icml2026}
\bibliography{ref}
\fi

%%% The part below is the appendix

\newpage
\onecolumn
\appendix

\begin{center}
    \textbf{\LARGE Appendix }
\end{center}

%\section*{Appendix}

%\tableofcontents
%\newpage

\paragraph{Roadmap.}
The appendix of our paper is organized as follows. 
Section~\ref{sec:app_preli} gives the preliminary tools and some tools from previous works, . 
Section~\ref{sec:app_benchmark} complements the omitted details of our greedy benchmark. 
Section~\ref{sec:app_data_structure} complements the omitted details of our data structures. Section~\ref{sec:app_alg_lsh} shows the implementation with locality sensitive hashing. 
Section~\ref{sec:extension} demonstrate how to utilize our data structure to solve extension forms of fast submodular function maximization problem. 
Section~\ref{sec:app_extension} shows how our work can be extended to typical constraints and online setting. Section~\ref{sec:combination} utilizes our data structures to obtain a formal submodular maximization algorithm. %Section \ref{sec:app_alg_lsh} gives an alternative data structure to our fast quadratic form search data structure.
% Removed limitation and impact section for AISTATS 2026
% Section \ref{sec:limitation} shows the limitation of this paper. 
% Section \ref{sec:impact} shows the impact of this paper.

\section{Preliminary}\label{sec:app_preli}

\subsection{Notation}  
For any positive $n$, we use $[n]$ to denote the set $\{1,2,\cdots,n\}$. 
For a square matrix $A$, we use $\tr[A]$ to denote the trace of $A$.
For a set $S\subseteq[n]$ , we use $\overline{S}$ to denote the complementary set of $S$, which is equivalent to $[n]\backslash S$.
If there is no ambiguity, we will omit $[n]$.
We use $n$ to denote the number of elements in the ground set $[n]$.
We use $d$ to denote the dimension of each vector.
For $t\ge 1$, we use $S_t$ to denote the set of selected elements at time $1,2,\cdots,t$.

For two $d$-dimensional vectors $u,v$, we use $\langle u,v \rangle$ to denote their inner product. For two $n\times m$ matrices $A,B$, we use $\langle A,B \rangle$ to denote their inner product, that is, their elementary-wise product. For any function $f$, we will use $\wt{O}(f)$ to denote $O(f \cdot \poly (\log f))$.

For notational simplicity, we use $\ov{S}_t$ to denote $[n] \backslash S_t$. 
In submodular maximization, in each iteration, we are greedily updating the an object $S_{t+1} \subset [n] $ in the following sense

\begin{align*}
    S_{t+1} \gets S_t \cup \{ \arg\max_{ j \in \ov{S}_t } f( S_t \cup \{ j \} ) \}
\end{align*}
where $f$ is a submodular function.

Let us decompose $f$ as sum of increments, e.g.
\begin{align*}
    % f( S ) = f(S - \{j\}) + g( S - \{j\} , j )
    f(S) = f(S-\{j\}) + \Delta_f(j|S-\{j\})
\end{align*}

Let $Q = S - \{j\}$.
Let us represent $\Delta_f(j|S-\{j\})$ as
\begin{align*}
    \Delta_f(j|S-\{j\}) = u_j^\top \cdot h( Q ) \cdot u_j
\end{align*}
where $h : 2^{[n]} \rightarrow \R^{d \times d}$ and the output of function $h$ should be a PSD matrix when $f$ is monotone.

\subsection{Previous Tools}

\paragraph{Fast multiplication.}\label{sec:previous_tools:fast_mutiplication}
In our algorithm, we need to operate dense matrix multiplication.
Currently, the fastest matrix multiplication achieves $O(n^{2.373})$ time complexity (\cite{aw21}).
\begin{theorem}[Matrix Multiplication(\cite{aw21})]\label{thm:fast_matrix_multiplication}
Let $\Tmat(n)$ denote the number of operations used to compute matrix product $AB$ where $A,B \in \mathbb{R}^{n\times n}$.
There exists an algorithm computing $AB$ with $\Tmat(n)=O(n^\omega)$ for some $\omega < 2.373$.
\end{theorem}

\section{Omitted Details of Benchmark}\label{sec:app_benchmark}
In this section, we provide omitted details of benchmark. Section~\ref{sec:app_benchmark:versions_greedy} present details of different version of greedy algorithm.

% Section~\ref{sec:app_benchmark:approximate_proof} present missing proof of Corollary~\ref{cor:error_approximate_greedy_algo}. % Removed in ICML2026

% Section~\ref{sec:app_benchmark:batch_proof} present missing proof of Theorem \ref{thm:trivial_algorithm_v2}. % Removed in ICML 2026

\subsection{Different Version of Greedy Algorithm}\label{sec:app_benchmark:versions_greedy}

The greedy algorithm benchmark is as described in Algorithm~\ref{alg:greedy}.

\begin{algorithm}[!ht]\caption{Greedy algorithm benchmark}\label{alg:greedy}
\begin{algorithmic}[1]
\Procedure{GreedyAlgorithm}{submodular function $f$}
    \State $S_0 \gets \emptyset$
    \For{$t=0 \to k-1$}
        \State $j \gets \arg\max_i\{ f(S_t \cup \{i\}) \}$
        \State $S_{t+1} \gets S_t \cup \{j\}$
    \EndFor
    \State \Return $S_k$
\EndProcedure
\end{algorithmic}
\end{algorithm}

\begin{algorithm}[!ht]\caption{Naive implementation of the greedy selection algorithm}\label{alg:trivial}
\begin{algorithmic}[1]
\Procedure{Naive-Alg}{$n,k,\{u_1, \cdots, u_n \} \subseteq \R^d$}
    \State $S_0 \gets \emptyset$
    \For{$t=0 \to k-1$}
        \State $A_t \gets h (S_t)$, $w_{\max} \gets -1$, $j_{\max} \gets -1$
        \For {$j\in  \ov{S}_t$} \label{line:enumeration}
            \State $w_j \gets u_j^{\top} A_t u_j$
            \If {$w_j > w_{\max}$}
                \State $w_{\max} \gets w_j$,  $j_{\max} \gets j$
            \EndIf 
        \EndFor 
        \State $S_{t+1} \gets S_t \cup \{ j_{\max} \}$
    \EndFor
    \State \Return $S_k$
\EndProcedure
\end{algorithmic}
\end{algorithm}

\begin{algorithm}[!ht]\caption{Greedy Algorithm with Batch}\label{alg:trivial_v2}
\begin{algorithmic}[1]
\Procedure{BatchAlg}{$n,k,\{u_1, \cdots, u_n \} \subset \R^d$}
    \State $S_0 \gets \emptyset$
    \For{$t=0 \to k-1$}
        \State $A_t \gets h (S_t)$
        \State $w_{\max} \gets -1$
        \State $j_{\max} \gets -1$
        %\State Divide $ \ov{S}_t$ into $\lceil | \ov{S}_t|/d \rceil$ subsets $K_1, K_2, \cdots, K_m$ such that each set consists of $d$ elements\label{line:partition}
        \For {$b=1$ to $m$} \label{line:clever_enumeration}
            %\State Write $K_b$ as $\{i_1,i_2,\cdots,i_d\}$
            \State $U \gets [~u_{(b-1)d + 1}~u_{(b-1) d+2}~\cdots~u_{(b-1)d+d}~]$
            \State $M \gets U^{\top} A_t U$
            \For {$j=1$ to $d$}
                \State $j_{\mathrm{tmp}} \gets (b-1) \cdot d + j$
                \If {$M_{j,j} > w_{\max}$ and $ j_{\mathrm{tmp}} \in \ov{S}_t$}
                    \State $w_{\max} \gets M_{j,j}$
                    \State $j_{\max} \gets j_{\mathrm{tmp}}$ %\Zhao{These steps are weired, i change a bit, you should double check}
                \EndIf 
            \EndFor 
        \EndFor 
        \State $S_{t+1} \gets S_t \cup \{ j_{\max} \}$
    \EndFor
    \State \Return $S_k$
\EndProcedure
\end{algorithmic}
\end{algorithm}

\section{Details and Omitted Proofs of Our Data Structures}\label{sec:app_data_structure}

This section is organized as follows: Section~\ref{sec:app_data_structure:details} present the detials of our data structures, Ipe and ds. 
%Section \ref{sec:ds_1} provides a complete quadratic form search data structure which can approximately return the quadratic form $u_j^\top A u_j$ for a query matrix $M$ and given vectors $u_1,u_2,\cdots,u_n$; 
Section \ref{sec:app_proof_IPE} complements the omitted proofs of our inner product estimation data structure (see Section \ref{sec:ds}), which can return the approximate inner products between a query vector $v$ and a list of given vectors $u_1,u_2,\cdots,u_n$; Section \ref{sec:app_faster_ds} complements the omitted proofs of our faster quadratic form data structure (see Section \ref{sec:ds}), which can approximately return the quadratic form $u_j^\top A u_j$ for a query matrix $M$ and given vectors $u_1,u_2,\cdots,u_n$.  

\subsection{Details of Data Structures and Algorithms}\label{sec:app_data_structure:details}

\begin{algorithm}[!ht]\caption{Dynamic inner product estimation}\label{alg:inner_product_estimation}
\begin{algorithmic}[1]
\State {\bf data structure} \textsc{DynamicInnerProductEstimation}
\Comment{Theorem \ref{thm:IPE}}
\State {\bf members}
\State \hspace{4mm} ADE Ade
\State {\bf end members}
\State 
\Procedure{Init}{$x_1,x_2,\cdots,x_n,\epsilon,\delta$} \Comment{Lemma \ref{lemma:app_IPE_init} } %\Zhao{Create a lemma for init.} }
    \State $\epsilon'=\frac{2\epsilon}{3D}$, Ade.\textsc{Init}$(Q(x_1),Q(x_2),\cdots,Q(x_n),\epsilon',\delta)$
\EndProcedure 
\State 
\Procedure{Update}{$i,z$} \Comment{Lemma \ref{lemma:app_IPE_update} } %\Zhao{Create a lemma for update.} }
    \State Ade.\textsc{Update}$(i,Q(z))$
\EndProcedure 
\State 
\Procedure{Query}{$q$} \Comment{Lemma \ref{lemma:app_IPE_query} } %\Zhao{Create a lemma for query.} }
    \State $\tilde{d}_1, \tilde{d}_2, \cdots, \tilde{d}_n$ = Ade.\textsc{Query}$(P(q))$
    \State $\tilde{w}_i = 1-\frac{1}{2}\tilde{d}_i^2$ for $i \in [n]$
    %\For {$i=1,2,\cdots,n$}
    %    \State $\tilde{w}_i = 1-\frac{1}{2}\tilde{d}_i^2$
    %\EndFor 
    \State \Return $\{\tilde{w}_i\}_{i=1}^n$
\EndProcedure 
\State {\bf data structure}
\end{algorithmic}
\end{algorithm}

\begin{algorithm}[!ht]\caption{Data Structure ds} \label{alg:ds_v2}
\begin{algorithmic}[1]
    \State {\bf data structure} \textsc{ds}
    \State {\bf members}
    \State \hspace{4mm} $S$ (Candidate index set)
    \State \hspace{4mm} \textsc{DynamicInnerProductEstimation} $\text{Ipe}$ \Comment{Algorithm~\ref{alg:inner_product_estimation}}
    \State {\bf end members}
    \State
    \Procedure{\textsc{Init}}{$\{u_1,u_2,\cdots,u_n\}\subset \mathbb{R}^d, \epsilon > 0, \delta > 0$}
        \State $S \gets [n]$,
        %\For {$i=1$ to $n$}
        %    \State $v_i \gets [ ~ (u_i)_1 \cdot u_i^\top ~~ (u_i)_2 \cdot u_i^\top ~~ \cdots ~~ (u_i)_d \cdot u_i^\top ~]^\top$
        %\EndFor 
        $v_i \gets [ ~ (u_i)_1 \cdot u_i^\top ~~ (u_i)_2 \cdot u_i^\top ~~ \cdots ~~ (u_i)_d \cdot u_i^\top ~]^\top$ for $i \in [n]$
        \State $\text{Ipe}$.\textsc{Init}$(\{v_1,v_2,\cdots v_n\}, \epsilon, \delta)$ \label{line:Ipe_init_v2}
    \EndProcedure
    \State
    \Procedure{\textsc{Query}}{$M\in \mathbb{R}^{d\times d}$}
        % \State {\color{blue}/* Find $j$ with $u_j^\top M u_j$ as large as possible */}
        \State $M \gets [ ~ m_1 ~ m_2 ~ \cdots ~ m_d ~ ]$, $m \gets [ ~ m_1^\top ~ m_2^\top ~ \cdots ~ m_d^\top ~ ]^\top$
        \State $\{\tilde{w}_{1}, \tilde{w}_{2},\cdots,\tilde{w}_{n}\} \gets \text{Ipe}$.\textsc{Query}$(m)$ \label{line:Ipe_query_v2}
        \State \Return $\arg\max_{j\in S}\{\tilde{w}_j\}$ \label{line:return}
    \EndProcedure
    \State
    \Procedure{\textsc{Delete}}{$i\in[n]$}
        \State $S \gets S \backslash \{i\}$
    \EndProcedure
    \State {\bf end data structure}
\end{algorithmic}
\end{algorithm} 

\iffalse
\begin{algorithm}[!ht]\caption{Data Structure ds} \label{alg:ds}
\begin{algorithmic}[1]
    \State {\bf data structure} \textsc{ds}
    \State {\bf members}
    \State \hspace{4mm} $S$ (Candidate index set)
    \State \hspace{4mm} \textsc{IPE} $\text{Ipe}_1, \text{Ipe}_2, \cdots, \text{Ipe}_d$
    \State {\bf end members}
    \State 
    \Procedure{\textsc{Init}}{$\{u_1,u_2,\cdots,u_n\}\subset \mathbb{R}^d, \epsilon > 0, \delta > 0$}
        \State $S = [n]$
        \For {$i=1$ to $d$}
            \State $\text{Ipe}_i$.\textsc{Init}$(\{(u_1)_i \cdot u_1,(u_2)_i \cdot u_2,\cdots,(u_n)_i \cdot u_n\}, \epsilon/d, \delta/d)$
        \EndFor 
    \EndProcedure
    \State 
    \Procedure{\textsc{Query}}{$M\in \mathbb{R}^{d\times d}$}
       % \State {\color{blue}/* Find $j$ with $u_j^\top M u_j$ as large as possible */} %%%Zhao: remove it
        \For {$i \in S$}
            \State $s_i \gets 0$
        \EndFor 
        \For {$i=1$ to $d$}
            \State $m_i \gets$ the $i$-th column of $M$
            \State $\{\tilde{w}_{i1}, \tilde{w}_{i2},\cdots,\tilde{w}_{id}\} \gets \text{Ipe}_i$.\textsc{Query}$(m_i)$ \label{line:Ipe_query}
            \For {$j\in S$}
                \State $s_j \gets s_j + \tilde{w}_j$
            \EndFor 
        \EndFor 
        \State \Return $\arg\max_{j}\{s_j\}$ \label{line:s_j}
    \EndProcedure
    \State 
    \Procedure{\textsc{Delete}}{$i\in[n]$}
        \State $S \gets S \backslash \{i\}$
    \EndProcedure
    \State {\bf end data structure}
\end{algorithmic}
\end{algorithm}
\fi

\begin{algorithm}[!ht]\caption{Fast Submodular Function Maximization Algorithm with Data Structure.}\label{alg:ours}
\begin{algorithmic}[1]
\Procedure{FastSubmodularMaximization}{$n,k,h,\{u_1, \cdots, u_n \} \subset \R^d$}
    \State \text{ds}.\textsc{Init}($u_1, \cdots u_n$)
    \State $S_0 \gets \emptyset$
    \For{$t=0 \to k-1$}
        \State $A_t \gets h (S_t)$
        %\State {\color{blue}/* $j = \arg\max_{ j \in \ov{S}_t } u_j^\top A_t u_j $ */} \label{line:argmax}
        \State $j \gets \text{ds}.\textsc{Query}(A_t)$
        \State $S_{t+1} \gets S_t \cup \{ j \}$
        \State $\text{ds}.\textsc{Del}(j)$
     \EndFor
    \State \Return $S_k$
\EndProcedure
\end{algorithmic}
\end{algorithm}

\subsection{Proof of Theorem~\ref{thm:IPE}: Inner Product Estimation} \label{sec:app_proof_IPE}

The goal of this section is to prove the following theorem.

\begin{theorem}[Restatement of Theorem \ref{thm:IPE}]\label{thm:app_IPE}
There is a data structure uses $\tilde{O}(\epsilon^{-2}D^2nd\log(1/\delta))$ space for the Online Approximate Adaptive Inner Product Estimation Problem with the following procedures:
\begin{itemize}
    \item \textsc{Init}$(  \{x_1, x_2, \dots, x_n\}\subset \R^d, \epsilon \in (0,1), \delta \in (0,1))$: Given data points $\{x_1, x_2, \dots, x_n\}\subset \R^d$ ($\| x_i \| \leq D$, for all $i \in [n]$), an accuracy parameter $\epsilon$ and a failure probability $\delta$ as input, the data structure preprocesses in time $\wt{O}(\epsilon^{-2} D^2 n d \log(1/\delta))$.
    \item \textsc{Update}$(i \in [n], z \in \R^d)$: Given index $i$ and coordinate $z$, the data structure replace $x_i$ by $z$ in time $\tilde{O}(\epsilon^{-2}D^2d\log(1/\delta))$.
    \item \textsc{Query}$(q \in \R^d)$: Given a query point $q \in \R^d$ (where $\| q \| \leq 1$), the \textsc{Query} operation takes $q$ as input and approximately estimates the inner product of $q$ and all the data points $\{x_1, x_2, \dots, x_n\}\subset \R^d$ in time $\tilde{O}(\epsilon^{-2}D^2(n+d)\log(1/\delta))$ i.e. it provides a set of estimates $\{\tilde{w}_i\}_{i=1}^n$ such that:
    \begin{align*}
       \forall i \in[n], \langle q,x_{i}\rangle - \epsilon \leq \tilde{w}_{i} \leq \langle q,x_{i}\rangle + \epsilon
    \end{align*}
     with probability at least $1 -\delta$, even for a sequence of adaptively chosen queries.
\end{itemize}
\end{theorem}
\begin{proof}
We prove Theorem \ref{thm:app_IPE} by proving the following three lemmas Lemma \ref{lemma:app_IPE_init}, Lemma \ref{lemma:app_IPE_update} and Lemma \ref{lemma:app_IPE_query}, where Lemma \ref{lemma:app_IPE_init} focuses on procedure \textsc{Init}, Lemma \ref{lemma:app_IPE_update} focuses on procedure \textsc{Update} and Lemma \ref{lemma:app_IPE_query} focuses on procedure \textsc{Query}.
\end{proof}

%\subsection{Proof of Lemma~\ref{lemma:IPE_init}}

\begin{lemma}[Running time of initialization.]\label{lemma:app_IPE_init}
The procedure \textsc{Init} in Algorithm \ref{alg:inner_product_estimation} runs in time $\tilde{O}(\epsilon^{-2}D^2nd\log(1/\delta))$.
\end{lemma}

\begin{proof}
This reason is that the member Ade has \textsc{Init} time
\begin{align*}
\tilde{O}(\epsilon'^{-2}nd\log(1/\delta))=\tilde{O}(\epsilon^{-2}D^2nd\log(1/\delta))
\end{align*}
according to Theorem \ref{thm:cn22}.
\end{proof}

\begin{lemma}[Running time of Update. ]\label{lemma:app_IPE_update}
The procedure \textsc{Update} in Algorithm \ref{alg:inner_product_estimation}  
runs in time $\tilde{O}(\epsilon^{-2}D^2d\log(1/\delta))$.
\end{lemma}

\begin{proof}
The reason is that the member Ade has \textsc{Update} time $\tilde{O}(\epsilon'^{-2}d\log(1/\delta))=\tilde{O}(\epsilon^{-2}D^2d\log(1/\delta))$ by Theorem \ref{thm:cn22}  
\end{proof}

%\subsection{Proof of Lemma~\ref{lemma:IPE_query}}

\begin{lemma}[Correctness and running time of Query.]\label{lemma:app_IPE_query}
The procedure \textsc{Query} in Algorithm \ref{alg:inner_product_estimation} outputs $\{\tilde{w}_i\}_{i=1}^n$ correctly and runs in time 
\begin{align*}
    \tilde{O}(\epsilon^{-2}D^2(n+d)\log(1/\delta)) .
\end{align*}
\end{lemma}

\begin{proof}
For the correctness part of procedure \textsc{Query}, let us denote $\|Q(x_i)-P(y)\|_2$ by $d_i$, using asymmetric transformation we have
\begin{align*}
    \langle x_i,y \rangle &= D - \frac{1}{2} D d_i^2,
\end{align*}
and by Theorem \ref{thm:cn22}, Ade.\textsc{Query}($P(y)$) gives a series of $\{\tilde{d_i}\}_{i=1}^n$ with 
\begin{align*}
(1-\epsilon')d_i \le \tilde{d_i} \le (1+\epsilon')d_i,
\end{align*}
thus 
\begin{align*}
    \tilde{w_i} = D-\frac{D}{2}\tilde{d_i}^2 \le D-\frac{D}{2}(1-\epsilon')^2d_i^2 \le \langle x_i,y \rangle + \frac{D}{2}(2\epsilon'-\epsilon'^2)d_i^2\le \langle x_i,y \rangle + \epsilon,
\end{align*}
\begin{align*}
    \tilde{w_i} = D-\frac{D}{2}\tilde{d_i}^2 \ge D-\frac{D}{2}(1+\epsilon')^2d_i^2 \ge \langle x_i,y \rangle - \frac{D}{2}(2\epsilon'+\epsilon'^2)d_i^2\ge \langle x_i,y \rangle - \epsilon,
\end{align*}
where we use $\frac{D}{2}(2\epsilon'-\epsilon'^2)\le D\epsilon' \le \epsilon$ and $\frac{D}{2}(2\epsilon'+\epsilon'^2)\le \frac{3}{2}D\epsilon' \le \epsilon$.
And for the running time, it holds because the member Ade has \textsc{Query} time
\begin{align*}
\tilde{O}(\epsilon'^{-2}(n+d)\log(1/\delta))=\tilde{O}(\epsilon^{-2}D^2(n+d)\log(1/\delta))
\end{align*}
according to Theorem \ref{thm:cn22}.
\end{proof}

\subsection{Proof of Theorem~\ref{thm:our_ds2}: Fast Quadratic Form Search Data Structure} \label{sec:app_faster_ds}

\begin{theorem}[Restatement of Theorem~\ref{thm:our_ds2}] \label{thm:app_our_ds2}
There exists a data structure uses $\tilde{O}(\epsilon^{-2}D^2nd^2\log(1/\delta))$ space with the following procedures:
\begin{itemize}
    \item \textsc{Init}{$(\{u_1,u_2,\cdots,u_n\}\subset \mathbb{R}^d, \epsilon\in(0,1), \delta\in(0,1))$}. Given vectors $u_1,u_2,\cdots,u_n$ satisfying $\|u_i\|_2 \le D$ for any $i\in [n]$, accuracy parameter $\epsilon$ and failure probability $\delta$, the data structure initializes with time $\tilde{O}(\epsilon^{-2}D^2nd^2\log(1/\delta))$.
    \item \textsc{Query}{$(M\in \mathbb{R}^{d\times d})$}. Given a $d\times d$ matrix $M$, the data structure outputs a $j_0\in\ov{S}_t$ 
    such that $u_{j_0}^\top M u_{j_0} \ge \max_{j}\{ u_j^\top M u_j \} - 2\epsilon$ with probability at least $1-\delta$ in time $\tilde{O}(\epsilon^{-2}D^2(n+d^2)\log(1/\delta))$. 
    \item \textsc{Delete}{$(i\in[n])$}. Given index $i$, the data structure deletes $u_i$ from the candidate index set in time $O(\log n)$.
\end{itemize}
\end{theorem}

\begin{proof}
This theorem follows from Lemma \ref{lem:app_ds_correctness_v2}, Lemma \ref{lem:app_ds_init_v2}, Lemma \ref{lem:app_ds_query_v2} and Lemma \ref{lem:app_ds_delete_v2}.
\end{proof}

%\subsection{Proof of Lemma~\ref{lem:app:ds_correctness_v2}}

\begin{lemma}[Correctness.] \label{lem:app_ds_correctness_v2}
Procedure \textsc{QUERY} outputs a $j_0\in S$ such that $u_{j_0}^\top M u_{j_0} \ge \max_{j\in \overline{S}_t}\{ u_j^\top M u_j \} - 2\epsilon$ with probability at least $1-\delta$.
\end{lemma}
\begin{proof}
By Theorem \ref{thm:IPE}, line \ref{line:Ipe_query_v2} returns a list $\{\tilde{w}_{1},\tilde{w}_{2},\cdots,\tilde{w}_{n}\}$ such that $\langle m, v_i \rangle - \epsilon \le \tilde{w}_{i} \le \langle m, v_i \rangle + \epsilon$ with probability at least $1 - \delta$. Note that 
\begin{align*}
    \langle m, v_i \rangle = \sum_{j=1}^d \langle m_j, (u_i)_j \cdot u_i\rangle = \langle M, u_iu_i^\top \rangle = \tr[Mu_iu_i^\top] = u_i^\top M u_i,
\end{align*}
where the first step follows from the definitions of $m$ and $v_i$, the second step follows from $(u_i)_j \cdot u_i$ equals the $j$-th column of $u_iu_i^\top$, the third step follows from $\langle A,B \rangle = \tr[AB^\top]$, %\Zhao{Let us use $\tr[]$ for trace} 
and the last step follows from $\tr[AB] = \tr[BA]$. Thus 
\begin{align}
    u_i^\top M u_i - \epsilon \le \tilde{w}_i \le u_i^\top M u_i + \epsilon \label{eq:tilde_w_app}
\end{align}
with probability at least $1-\delta$. Hence procedure \textsc{Update} outputs a $j_0$ such that 
\begin{align*}
    u_{j_0}^\top M u_{j_0} \le \tilde{w}_{j_0} + \epsilon = \max_{j\in S}\{\tilde{w}_{j}\} + \epsilon \le u_j^\top M u_j + 2\epsilon
\end{align*}
with probability $1-\delta$, where the first step follows from Equation \eqref{eq:tilde_w_app}, the second step follows from line \ref{line:return} in Algorithm \ref{alg:ds_v2}, and the last step follows from Equation \eqref{eq:tilde_w_app}.
\end{proof}

%\subsection{Proof of Lemma~\ref{lem:app:ds_init_v2}}

\begin{lemma}[Init time.] \label{lem:app_ds_init_v2}
Procedure \textsc{Init} takes $\tilde{O}(\epsilon^{-2}D^2nd^2\log(1/\delta))$ time.
\end{lemma}

\begin{proof}
By Theorem \ref{thm:IPE}, $\text{Ipe}$.\textsc{Init} in line \ref{line:Ipe_init_v2} takes  $\tilde{O}(\epsilon^{-2}D^2nd^2\log(1/\delta))$ time, since $\dim(v_i) = d^2$ for each $i\in [n]$.
\end{proof}

%\subsection{Proof of Lemma~\ref{lem:app:ds_query_v2}}

\begin{lemma}[Query time.] \label{lem:app_ds_query_v2}
Procedure \textsc{Query} takes $\tilde{O}(\epsilon^{-2}D^2(n+d^2)\log(1/\delta))$ time.
\end{lemma}

\begin{proof}
By Theorem \ref{thm:IPE}, line \ref{line:Ipe_query_v2} takes $\tilde{O}(\epsilon^{-2}D^2(n+d^2)\log(1/\delta))$ time.
\end{proof}

\begin{lemma}[Delete time.] \label{lem:app_ds_delete_v2}
Procedure \textsc{Delete} takes $O(\log n)$ time to delete an index from candidate index set.
\end{lemma}
\begin{proof}
This holds trivially since it takes $O(\log n)$ time to find an index in the candidate index set $S$.
\end{proof}

\section{An extension: Fast Submodular Maximization Algorithm with LSH} \label{sec:app_alg_lsh}
In this section, we present an alternative implementation of fast submodular maximization by using locality sensitive hashing technique.
In Section~\ref{sec:app_alg_lsh:preli}, we review previous results about locality sensitive hashing and approximate near neighbor in \cite{im98,ssx21,ar15}.  In Section~\ref{sec:app_alg_lsh:result}, we present our results of greedy selection via locality sensitive hashing. In Section~\ref{sec:app_alg_lsh:details}, we present details of LSH data structure. In Section~\ref{sec:app_alg_lsh:LSH_problem}, we present the missing proof of Corollary~\ref{cor:ds_LSH_submodular_problem}.

\subsection{Preliminary}\label{sec:app_alg_lsh:preli}

Recent progress in locality sensitive hashing would also help us design faster algorithms.
Informally, we call a family of hash functions sensitive when it tends to project points closed to each other to the same value and tends to project points far away from each other to different value.
The formal definition of sensitive hashing is given in definition \ref{def:locality_sensitive_hashing}.
 
\begin{definition}[Locality sensitive hashing, \cite{im98}]\label{def:locality_sensitive_hashing}
Given parameter $R>0,c>1,0<p_2<p_1<1$, we say a family of hash functions ${\cal H}$ is $(R,cR,p_1,p_2)$-sensitive if it satisfies the following conditions:
\begin{itemize}
    \item for any $x,y\in\mathbb{R}^d$, as long as $\|x-y\|_2\le R$, then $\Pr_{h\in {\cal H}} [h(x)=h(y)]\ge p_1$;
    \item for any $x,y\in\mathbb{R}^d$, as long as $\|x-y\|_2\ge cR$, then $\Pr_{h\in {\cal H}} [h(x)=h(y)]\le p_2$.
\end{itemize}
\end{definition}
 
The approximate near neighbor problem is to find a point $p$ closed to the given point $q$ with some approximation error tolerance.  

\begin{definition}[Approximate Near Neighbor, \cite{ssx21}]
Let $c>1$ and $r\in (0,2)$. Given an $n$-point
dataset $P\subset \mathbb{S}^{d-1}$ on the sphere, the goal of the $(c, r)$-Approximate Near Neighbor (ANN) problem is to build a data structure that, given a query $q\in \mathbb{S}^{d-1}$ with the promise that there exists a datapoint $p\in P$ with $\|p-q\|_2\le r$ returns a datapoint $p'\in P$ within distance $c\cdot r$ from q.
\end{definition}
 
Implementing a naive algorithm in a brute force way, we can solve the approximate near neighbor problem in $O(n d)$ time and $O(n d)$ memory space.  
A faster solution can be given in time $O(d\cdot n^{\rho})$ in $O(d\cdot n^{\rho})$ time and $O(n^{1+\rho}+dn)$ space (\cite{ar15}).

\begin{theorem}[Andoni and Razenshteyn \cite{ar15} ]\label{thm:ANN}
Let $c>1$ and $r\in(0,2)$. Let $\rho=\frac{1}{2c^2-1}+o(1)$. The $(c,r)$-ANN on a unit sphere $\mathbb{S}^{d-1}$ can be solved in space $O(n^{1+\rho}+dn)$ and query time $O(d\cdot n^{\rho})$.
\end{theorem}

\subsection{Our Results}\label{sec:app_alg_lsh:result}
In this section, we give a greedy selection algorithm with data structures related to locality sensitive hashing (LSH). 
We first give a definition of asymmetric transformation, e.g. see \cite{ssx21}.
\begin{definition}[Asymmetric transformation%\cite{ns15}
]
Let $B\subseteq \mathbb{R}^d$ and $\forall  b\in B,\|b\|_2\le 1$. Let $a\in \mathbb{R}^d$ with $\|a\|_2=1$. We define the following asymmetric transformation:
\iffalse
\begin{align*}
    P(b)=\left[\begin{array}{ccc}
    b^{\top} & \sqrt{1-\|b\|_2^2} & 0 
    \end{array}\right]^{\top}, ~~~~
    Q(a)=\left[\begin{array}{ccc}
    a^{\top} & 0 & \sqrt{1-\|a\|_2^2} \\
    \end{array}\right]^{\top}.
    P(b)=[b^{\top}~\sqrt{1-\|b\|_2^2}~0]^\top \in \R^{d+2}, Q(a)=[a^\top~0~\sqrt{1-\|a\|_2^2}]^\top \in \R^{d+2}
\end{align*} 
\fi
\begin{align*}
     P(b)=[b^{\top}~\sqrt{1-\|b\|_2^2}~0]^\top \in \R^{d+2}, Q(a)=[a^\top~0~\sqrt{1-\|a\|_2^2}]^\top \in \R^{d+2}
\end{align*}
Therefore, we have 
\begin{align*}
    \|Q(a)\|_2 = 1, \|P(b)\|_2 = 1, \langle Q(a), P(b) \rangle = \langle a, b \rangle.
\end{align*}
\end{definition}

\subsection{Details of Data Structure with LSH.}\label{sec:app_alg_lsh:details}

The detailed data structure for fast quadratic form search is as described in Algorithm~\ref{alg:app_lsh_fqfs}.

\begin{algorithm}[!ht]\caption{Fast Quadratic Form Search Data Structure with LSH} \label{alg:app_lsh_fqfs}
\begin{algorithmic}[1]
    \State {\bf data structure} \textsc{FQFSwithLSH}
    \State {\bf members}
    \State \hspace{4mm} $S$ (Candidate index set)
    \State \hspace{4mm} \textsc{Max-IP} $\text{maxip}$
    \State {\bf end members}
    \State 
    \Procedure{\textsc{Init}}{$\{u_1,u_2,\cdots,u_n\}\subset \mathbb{R}^d, \epsilon > 0, \delta > 0$}
        \State $S = [n]$
        \For {$i=1$ to $n$}
            \State $v_i \gets [ ~ (u_i)_1 \cdot u_i^\top ~~ (u_i)_2 \cdot u_i^\top ~~ \cdots ~~ (u_i)_d \cdot u_i^\top ~]^\top$
        \EndFor 
        \State $\text{maxip}$.\textsc{Init}$(\{Q(v_1),Q(v_2),\cdots,Q(v_n)\}, \epsilon, \delta)$
    \EndProcedure
    \State 
    \Procedure{\textsc{Query}}{$M\in \mathbb{R}^{d\times d}$}
        \State Write $M$ as $[ ~ m_1 ~ m_2 ~ \cdots ~ m_d ~ ]$
        \State $m \gets [ ~ m_1^\top ~ m_2^\top ~ \cdots ~ m_d^\top ~ ]^\top$
        \State $i \gets \text{maxip}$.\textsc{Query}$(P(m))$
        \State \Return $v_i$
    \EndProcedure
    \State 
    \Procedure{\textsc{Delete}}{$i\in[n]$}
        \State $S \gets S \backslash \{i\}$
        \State maxip.\textsc{Delete}$(i)$ %Hang{Does MaxIP data structure support \textsc{Delete}?} \Zhao{Yes, let us assume that.}
    \EndProcedure
    \State {\bf end data structure}
\end{algorithmic}
\end{algorithm}

\begin{algorithm}[!ht]\caption{Fast submodular maximization algorithm using LSH}\label{alg:app_lsh_combination_alg}
\begin{algorithmic}[1]
\Procedure{FastSubmodularMaximizationLSH}{$n,k,h,\{u_1, \cdots, u_n \} \subset \R^d$}
    \State \textsc{FQFSL} ds.\textsc{Init}($u_1, \cdots u_n, \epsilon, \delta/k$) \Comment{Algorithm \ref{alg:app_lsh_fqfs}}
    \State $S_0 \gets \emptyset$
    \For{$t=0 \to k-1$}
        \State $A_t \gets h (S_t)$
        %\State {\color{blue}/* $j = \arg\max_{ j \in \ov{S}_t } u_j^\top A_t u_j $ */} \label{line:argmax}
        \State $j \gets \text{ds}.\textsc{Query}(A_t)$
        \State $S_{t+1} \gets S_t \cup \{ j \}$
        \State $\text{ds}.\textsc{Del}(j)$
     \EndFor
    \State \Return $S_k$
\EndProcedure
\end{algorithmic}
\end{algorithm}

\subsection{Proof of Corollary~\ref{cor:ds_LSH_submodular_problem}}\label{sec:app_alg_lsh:LSH_problem}
\begin{proof}
%Similarly, it follows from Theorem~\ref{thm:main_formal_LSH} and a variant of Corollary~\ref{cor:app_error_approximate_greedy_algo}. %Changed in ICML2026

Similarly, it follows from Theorem~\ref{thm:main_formal_LSH} and a variant of Corollary~\ref{cor:error_approximate_greedy_algo}.
\end{proof}

\section{Intuitive Extension to Typical Constraints and Online Setting}\label{sec:extension}
In practice, there are many variants of the setting we showed in Section~\ref{sec:preli}.
For a general optimization problem, it is usually taxonomized based on three perspectives: (i) objective function, (ii) constraint, and (iii) decision time.
For the objective function, as we have discussed in Section~\ref{sec:preli}, our embedding representation is fairly general, as the embedding representation has enough degree of freedom.
For the constraint, in addition to cardinality constraint in the form of $|S|\le k$, there are constraints of other structures in applications, e.g., knapsack constraints and matroid constraints.
For the decision time, if the decision must be made before knowing the complete information of the objective function and the constraints, it is called online optimization.

For simplicity, we demonstrated our accelerating method on the specific case of cardinality constraint and offline decision making.
However, our method could be naturally extended to other constraints and online setting.
So we will discuss the extension in this section.
More detailed proofs related to this section is refered to Section~\ref{sec:app_extension}. In Section~\ref{sec:extension:matroid}, we discuss the matroid constraints, where the maximization of $f(S)$ is over all subsets in some matroid $\mathcal{I}$. In Section~\ref{sec:extension:knapsack}, we discuss the knapsack constraints, where for including each element $j$ in $S$ we need to pay cost $w(j) \ge 0$ and the total cost must be less than or equal to a given budget $W$. In Section~\ref{sec:extension:online}, we present the online decision.

\subsection{Constraints: Matroid Constraints}\label{sec:extension:matroid}

We introduce the definition of matroid in Definition~\ref{def:matroid} and then define the submodular function maximization subject to matroid constraints.
\begin{definition}[Matroid]\label{def:matroid}
The pair $M=(E,\mathcal{I})$ is called a matroid where $\mathcal{I}$ is a family of subsets of $E$, if
\iffalse
\begin{itemize}
    \item $\mathcal{I}$ contains empty set, i.e., $\emptyset \in \mathcal{I}$.
    \item $\mathcal{I}$ is downward-closed, i.e., for each $A\subseteq B \subseteq E$, if $B \in \mathcal{I}$, then $A\in E$.
    \item $\mathcal{I}$ satisfies augmentation property, i.e., if $A, B\in\mathcal{I}$, $|A| < |B|$, then there exsits $x\in B-A$ such that $A+\{x\}\in\mathcal{I}$.
\end{itemize}
\fi
$\mathcal{I}$ contains empty set, i.e., $\emptyset \in \mathcal{I}$.
$\mathcal{I}$ is downward-closed, i.e., for each $A\subseteq B \subseteq E$, if $B \in \mathcal{I}$, then $A\in E$.
$\mathcal{I}$ satisfies augmentation property, i.e., if $A, B\in\mathcal{I}$, $|A| < |B|$, then there exsits $x\in B-A$ such that $A+\{x\}\in\mathcal{I}$.
\end{definition}
Suppose $M=([n], \mathcal{I})$ is a matroid, then we call the optimization problem \eqref{eqn:matroid_constraint} the submodular maximization with matroid constraints.
\begin{equation}\label{eqn:matroid_constraint}
\begin{aligned}
\max_{S\in \subseteq[n]} \quad & f(S)\\
\textrm{s.t.} \quad & S \in \mathcal{I}
\end{aligned}
\end{equation}
Fortunately, the greedy selection algorithm can be adjusted to be still effective for the matroid constraint.
Now we greedily update $S_t$ as $S_{t+1}\gets S_t + \{j_t\}$, where $j_t$ is selected as
\begin{align}\label{eqn:greedy_selection_matroid}
    j_t = \arg \max_{\{j\}+S_t\in\mathcal{I},\; j\in\ov{S}_t} \Delta_f(j|S_t).
\end{align}
This greedy selection guarantees a $1/2$ approximation error constant \cite{nwf78}.
Note that $j_t$ in \eqref{eqn:greedy_selection_matroid} shares similar form as greedy selection for cardinality constraint, except that we further restrict $j$ to satisfy $\{j\}+S_t$.
Algorithm~\ref{alg:app_fast_submodular_matroid} formally states how to apply our method. The following lemmas and theorems present the correctness and time complexity. 

\begin{lemma}[Correctness]\label{lem:error_approximate_greedy_algo_matroid}
Given $\epsilon > 0$. Suppose $M=([n],{\cal I})$ is a matroid. Consider an approximate greedy algorithm $A$ (see Algorithm \ref{alg:app_fast_submodular_matroid}) for the submodular function maximization with matroid constraint $M$ that has an oracle $O$: taken an arbitrary set $S\subseteq[n]$ and $i \in [n]\backslash S$, returns a value $O(S,i)$ with guarantee that $\Delta(i|S)-\epsilon \le O(S,i) \le \Delta(i|S)+\epsilon$. And in each time step $t = 1,2,\cdots,\rank(M)$, it selects 
\begin{align*}
% $
   j_t = \operatornamewithlimits{argmax}_{\{j\}+S_t\in\mathcal{I},\; j\in\ov{S}_t} \{O(S_{t-1},j)\},
% $
\end{align*}
and lets $S_t \gets S_{t-1} \cup \{j_t\}$, then this algorithm makes sure that it returns a set $S_{\rank(M)}$ with
\begin{align*}
% $
    f(S_{\rank(M)}) \ge \frac{1}{2} \max_{T\in\mathcal{I}}\{f(T)\} - \rank(M)\epsilon.
% $
\end{align*}
\end{lemma}

The proof is in Section~\ref{sec:app_extension}.
\begin{theorem} \label{thm:main_formal_matroid}
There is a submodular function maximization algorithm such that, given a submodular function maximization problem with matroid constraint and two precision parameters $\epsilon,\delta$ as input, it runs in $O(\epsilon^{-2}(nd+n^2+ \rank(M)d^2)\log(n/\delta))$ time and returns a solution $S$ with $f(S) \ge 1/2\max_{S\in\mathcal{I}}\{f(S)\} - \rank(M)\epsilon$  with probability at least $1-\delta$.
\end{theorem}

The proof is in Section~\ref{sec:app_extension}.

\subsection{Constraints: Knapsack Constraints}\label{sec:extension:knapsack}
 
Another natural constraint that often appears in the submodular function maximization problem is the knapsack constraint.
It views the ground set as indices of items and each item $j\in[n]$ has a weight $w(j)$.
We have a knapsack to put some items into it, subject to the constraint that the total weight of selected items can not exceed the maximum load of the knapsack.

%$\max_{S \subseteq [n]}  f(S)~~~\textrm{s.t.}\sum_{j\in S} w(j) \le W$
\begin{definition}[Maximization with knapsack constraints]
We call the maximization problem $\max_{S \subseteq [n]}  f(S)~~~\textrm{s.t.}\sum_{j\in S} w(j) \le W$ as the submodular function maximization problem with knapsack constraints.
\iffalse
\begin{equation}\label{eqn:knapsack_constraint}
\begin{aligned}
\max_{S \subseteq [n]} \quad & f(S)\\
\textrm{s.t.} \quad & \sum_{j\in S} w(j) \le W
\end{aligned}
\end{equation}
\fi
where $f$ is the objective submodular function, $w(j)\ge 0$ is the weight for item $j$, and $W\ge 0$ is the maximum load of the knapsack.
\end{definition}

Similarly, a natural way to adjust the greedy selection algorithm is to update $S_t$ as 
%\begin{align*}
 $
S_{t+1}\gets S_t + \{j_t\},
 $
%\end{align*}
where $j_t$ is selected as
\begin{align}\label{eqn:greedy_selection_knapsack}
   j_t = \operatornamewithlimits{argmax}_{w(j)+w(S_t) \le W,\; j\in\ov{S}_t} \frac{\Delta_f(j|S_t)}{w(j)}.
\end{align}
Compare the updating rule \eqref{eqn:greedy_selection_knapsack} for the knapsack case to the rule in Algorithm~\ref{alg:greedy}, we note that there are two differences: in \eqref{eqn:greedy_selection_knapsack} we further restrict $j$ to satisfy $w(j)+w(S_t) \le W$ and we substitute $\Delta_f(j|S_t)$ by $\frac{\Delta_f(j|S_t)}{w(j)}$. For the additional restriction
%\begin{align*}
$
w(j)+w(S_t) \le W,
$
%\end{align*}
we can add this restriction to our data structure as well.
For the $\Delta_f(j|S_t)$, we can modify our embedding representation.
Let $v_j=\frac{u_j}{\sqrt{w(j)}}$, then we have 
\begin{align*}
% $
\frac{\Delta_f(j|S)}{w(j)} 
= \frac{u_j^\top h(S) u_j}{w(j)} =  v_j^\top h(S) v_j.
% $
\end{align*}
The search for $j$ that maximizes $v_j^\top h(S) v_j$ is again in the form of queries can be solved by our dynamic data structure.However, simply using the greedy strategy \eqref{eqn:greedy_selection_knapsack} is not guaranteed to yield a small error.
Fortunately, it is proved that greedy selection by uniform cost and greedy selection by benefit per cost can not be bad at the same time \cite{leskovec2007cost}.
Combing the two greedy strategies, an error ratio of $1/2-1/(2e)$ is guaranteed.
Algorithm~\ref{alg:app_fast_submodular_knapsack} formally describes how to extend our method to the knapsack constraints. The following lemmas and theorems present the correctness and time complexity.

\begin{lemma}[Correctness] \label{lem:error_approximate_greedy_algo_knapsack}
Given $\epsilon > 0$. Consider an approximate greedy algorithm $A$ (see Algorithm \ref{alg:app_fast_submodular_knapsack}) for the submodular function maximization with knapsack constraints that has an oracle $O$: taken an arbitrary set $S\subseteq[n]$ and $i \in [n]\backslash S$, returns a value $O(S,i)$ with guarantee that $\Delta(i|S)-\epsilon \le O(S,i) \le \Delta(i|S)+\epsilon$ and $O'(S,i)$ with guarantee that 
$
\frac{\Delta(i|S)}{w(i)}-\epsilon \le O'(S,i) \le \frac{\Delta(i|S)}{w(i)}+\epsilon
$.

In the first pass, in each time step $t$, it selects 
% \begin{align*}
$
    j_t = \operatornamewithlimits{argmax}_{w(\{j\}+S_t) \le W,\; j\in\ov{S}_t} O(S_{t-1},j),
$
% \end{align*}
and lets $S_t \gets S_{t-1} \cup \{j_t\}$.
Suppose the first pass ends with a set $S_k$.
In the second pass, in each time step $t$, it selects
%\begin{align*}
 $
   j'_t = \operatornamewithlimits{argmax}_{w(\{j\}+S_t) \le W,\; j\in\ov{S}_t} O'(S_{t-1},j),
 $
%\end{align*}
and lets 
%\begin{align}
 $
S'_t \gets S'_{t-1}\cup\{j'_t\}
 $.
%\end{align}
Suppose the second pass ends with a set $S'_{k'}$.
Then it holds that
%\begin{align*}
 $
    \max\{f(S_k), f(S'_{k'})\} \ge (1/2-1/(2e)) \max_{T:w(T) \le W}\{f(T)\} - L\epsilon.
 $
%\end{align*}
\end{lemma}

The proof is in Section~\ref{sec:app_extension}.

%%%F.6, this is the knapsack version
\begin{theorem} \label{thm:main_formal_knapsack}
There is a submodular function maximization algorithm such that, given a submodular function maximization problem with knapsack constraint and two precision parameters $\epsilon,\delta$ as input, it runs in $O(\epsilon^{-2}(nd+n^2+kd^2)\log(k/\delta))$ time, where $k$ is the size of the optimal solution, and returns a solution $S$ with
%\begin{align*}
 $
    f(S) \ge (1/2-1/(2e))\max_{S:w(S) \le W}\{f(S)\} - L\epsilon
 $
%\end{align*}
with probability at least $1-\delta$.
\end{theorem}

The proof is in Section~\ref{sec:app_extension}.

\subsection{Online Decision}\label{sec:extension:online}

In the online decision setting, we receive one new element of the ground set in each iteration $t$, and must decide $S_t$ immediately.
\begin{definition}[Online Setting in \cite{streeter2008online}]
Let $A$ denote a finite set of activities and 
$(a,\tau)\in A\times\mathbb{R}_{+}$ denote an action, which means spending time $\tau$ on activity $a$.
We define a schedule $s$ as a sequence of actions $s:=\langle (a_1,\tau_1), \cdots (a_m, \tau_m)\rangle$.
Let $\mathcal{S}$ denote the set of all schedules.
Let $f^{(1)}\cdots f^{(n)}$ denote $n$ jobs,
where job $f^{(i)}:\mathcal{S}\rightarrow [0,1]$ is a monotone submodular function.
In each iteration, one job $f^{(i)}$ arrives.
At time $t$, we have the information about $\{f^{(i)}\}_{i=1}^{t-1}$.
The schedule $S^{(i)}$ for $f^{(i)}$ must be at time $t$.
We define the time cost by a schedule $S$ as $\ell(S):=\sum_{(a,\tau)\in S} \tau$.
The goal of online submodular function maximization is to minimize the regret $R$, which is defined as
%\begin{align*}
 $
    R(S^{(1)},\cdots,S^{(n)}) := (1-1/e)\frac{\sum_{i=1}^n f^{(i)}(S^*)}{n} - \frac{\sum_{i=1}^n f^{(i)}(S^{(i)})}{n}.
 $
%\end{align*} 
where 
% \begin{align*}
 $
S^*:=\arg\max_{S\in\mathcal{S}} \frac{\sum_{i=1}^n f^{(i)}(S*)}{n}
 $
% \end{align*} 
is the optimal schedule, subject to the constraint 
%\begin{align*}
 $
\E[\ell(S^{(i)})] = T
 $.
%\end{align*} 
\end{definition}

An algorithm named online greedy (OG) achieving small regret bound is proposed in \cite{streeter2008online} for the online submodular function maximization.
For the OG algorithm, from the perspective of operations, the main bottleneck is still greedily searching $\frac{\Delta_f(j|S_{t-1})}{\tau_j}$.
Hence our method could be adjusted for the online setting too.
\section{ Extension to Typical Constraints and Online Setting}\label{sec:app_extension}

In this section, we demonstrate how to extend our method to accelerate variants of submodular maximization problems, e.g., submodular maximization subject to different constraints and online submodular function maximization. In Section~\ref{sec:app_extension:matroid}, we discuss the matroid constraints, where the maximization of $f(S)$ is over all subsets in some matroid $\mathcal{I}$. In Section~\ref{sec:app_extension:knapsack}, we discuss the knapsack constraints, where for including each element $j$ in $S$ we need to pay cost $w(j) \ge 0$ and the total cost must be less than or equal to a given budget $W$. In Section~\ref{sec:app_extension:online}, we present the online decision.

\subsection{Constraints: Matroid Constraints}\label{sec:app_extension:matroid}

We introduce the definition of matroid in Definition~\ref{def:app_matroid} and then define the submodular function maximization subject to matroid constraints.
\begin{definition}[Matroid]\label{def:app_matroid}
The pair $M=(E,\mathcal{I})$ is called a matroid where $\mathcal{I}$ is a family of subsets of $E$, if
\begin{itemize}
    \item $\mathcal{I}$ contains empty set, i.e., $\emptyset \in \mathcal{I}$.
    \item $\mathcal{I}$ is downward-closed, i.e., for each $A\subseteq B \subseteq E$, if $B \in \mathcal{I}$, then $A\in E$.
    \item $\mathcal{I}$ satisfies augmentation property, i.e., if $A, B\in\mathcal{I}$, $|A| < |B|$, then there exsits $x\in B-A$ such that $A+\{x\}\in\mathcal{I}$.
\end{itemize}
\end{definition}
Suppose $M=([n], \mathcal{I})$ is a matroid, then we call the optimization problem \eqref{eqn:app_matroid_constraint} the submodular maximization with matroid constraints.
\begin{equation}\label{eqn:app_matroid_constraint}
\begin{aligned}
\max_{S\in \subseteq[n]} \quad & f(S)\\
\textrm{s.t.} \quad & S \in \mathcal{I}
\end{aligned}
\end{equation}
Fortunately, the greedy selection algorithm can be adjusted to be still effective for the matroid constraint.
Now we greedily update $S_t$ as $S_{t+1}\gets S_t + \{j_t\}$, where $j_t$ is selected as
\begin{align}\label{eqn:app_greedy_selection_matroid}
    j_t = \arg\max_{\{j\}+S_t \in \mathcal{I},\; j \in \ov{S}_t} \Delta_f(j \mid S_t).
\end{align}
This greedy selection guarantees a $1/2$ approximation error constant \cite{nwf78}.
Note that $j_t$ in \eqref{eqn:app_greedy_selection_matroid} shares similar form as greedy selection for cardinality constraint, except that we further restrict $j$ to satisfy $\{j\}+S_t$.
Algorithm~\ref{alg:app_fast_submodular_matroid} formally states how to apply our method.

\begin{algorithm}[!ht]\caption{Fast Submodular Function Maximization with Matroid Constraint }\label{alg:app_fast_submodular_matroid}
\begin{algorithmic}[1]
\Procedure{FastMatroidSubmodular}{$n,M=([n],\mathcal{I}),\{u_1, \cdots, u_n \} \subseteq \R^d$}
    \State $S_0 \gets \emptyset$
    \State FQFS ds.\textsc{Init}($u_1,u_2,\cdots,u_n,\epsilon,\delta/n$)
    \For{$t=1 \to \rank(M)$}
        \State ds.\textsc{Delete}($j$) for $S_{t-1}+\{j\} \notin \mathcal{I}$ \label{line:app_matroid_alg_delete1}
        \State $A_{t-1} \gets h (S_{t-1})$
        \State $j \gets $ ds.\textsc{Query}($A_{t-1}$)
        \State $S_{t} \gets S_{t-1} \cup \{j\}$
        \State ds.\textsc{Delete}($j$) \label{line:app_matroid_alg_delete2}
    \EndFor
    \State \Return $S_{\rank(M)}$
\EndProcedure
\end{algorithmic}
\end{algorithm}

\begin{lemma}[Correctness,restatement of Lemma~\ref{lem:error_approximate_greedy_algo_matroid}]\label{lem:app_error_approximate_greedy_algo_matroid}
Given $\epsilon > 0$. Suppose $M=([n],{\cal I})$ is a matroid. Consider an approximate greedy algorithm $A$ (see Algorithm \ref{alg:app_fast_submodular_matroid}) for the submodular function maximization with matroid constraint $M$ that has an oracle $O$: taken an arbitrary set $S\subseteq[n]$ and $i \in [n]\backslash S$, returns a value $O(S,i)$ with guarantee that $\Delta(i|S)-\epsilon \le O(S,i) \le \Delta(i|S)+\epsilon$. And in each time step $t = 1,2,\cdots,\rank(M)$, it selects 
\begin{align*}
    j_t = \operatornamewithlimits{argmax}_{\{j\}+S_t \in \mathcal{I},\; j \in \ov{S}_t} O(S_{t-1},j),
\end{align*}
and lets $S_t \gets S_{t-1} \cup \{j_t\}$, then this algorithm makes sure that it returns a set $S_{\rank(M)}$ with
\begin{align*}
    f(S_{\rank(M)}) \ge \frac{1}{2} \max_{T\in\mathcal{I}}\{f(T)\} - \rank(M)\epsilon.
\end{align*}
\end{lemma}

\begin{proof}

For convenience, let $k = \rank(M)$. For $t = 1,2,\cdots,k$, define $\Delta_{t-1}$ be the increment of the greedy algorithm to the objective function $f$ at time step $t$. Let $T$ and $S$ be optimal and greedy solutions respectively, with $|T| = |S| = k$.

For $t = 0, 1, \cdots, k$, let $U_{t-1}$ be the set of all elements considered during the first $t$ iterations before the addition of a $t$-th element to $S_{t-1}$, that is, the set of the deleted indices (line \ref{line:app_matroid_alg_delete1} and line \ref{line:app_matroid_alg_delete2} in Algorithm \ref{alg:app_fast_submodular_matroid}) at time step $t=0,1,\cdots,t$. For $t = 1,\cdots, k$, let $s_{t-1} = |T \cap (U_t - U_{t-1}) |$, which describes whether the element added to $S$ by the greedy algorithm at time step $t$ belongs to $T$. 

For a set $S \subseteq [n]$ (not necessary in $M$), define $\text{rank}(S)$ be the cardinality of the largest subset of $S$ in $M$, and define $span(S) = \{ j \in N | \text{rank}(S\cup\{j\}) = \text{rank}(S) \}$.

First we show 
\begin{align}
    \sum_{j\in T-S}\Delta(j|S) \le \sum_{i=1}^{k} (\Delta_{i-1}+2\epsilon)s_{i-1}. \label{eq:matroid_constraint1}
\end{align}
This is because 
\begin{align*}
    \sum_{j\in T-S} \Delta(j|S) 
    \le & ~ \sum_{j \in T} \Delta(j|S) \\
    = & ~ \sum_{t=1}^K \sum_{j \in T\cap (U^t-U^{t-1})} \Delta(j|S) \\
    \le & ~ \sum_{t=1}^K (\Delta_{t-1}+2\epsilon)s_{t-1},
\end{align*} 
where the first step follows from $f$ is nondecreasing, the second step follows from the partition $T=T\cap\sum_{t=1}^{k} (U^t-U^{t-1})$ and the last step follows from the greedy rule and the oracle guarantee.

Second we show that $U^t \subseteq \text{span}(S^t)$. This is because for an element $x$ of $U^t$, if $x\notin \text{span}(S^t)$, then $x\notin S^t$, implying $x$ is excluded in line \ref{line:app_matroid_alg_delete1} of Algorithm \ref{alg:app_fast_submodular_matroid} at a time step no larger than $t$, which means $S^t\cup\{x\} \notin M$, then $x \in \text{span}(S^t)$, a contradiction. 

Third we show $\sum_{i=1}^t s_{i-1} \le t$ for $t = 1, \cdots, k$. This is because $\sum_{i=1}^t s_{i-1} = |T \cap U^t|$. Since $U^t \subseteq \text{span}(S^t)$ we have $|T \cap U^t| \le |T \cap \text{span}(S^t)|$. But since $T \in M$ and $\rank(\text{span}(S^t)) = |S^t| = t$, we obtain $|T \cap \text{span}(S^t)| \le t$. Therefore $\sum_{i=1}^t s_{i-1} \le t$.

Combining them,  
we have 
\begin{align}
    \sum_{i=1}^{k}\Delta_{i-1}s_{i-1} = & ~ \sum_{i=1}^{k-1}(\Delta_{i-1} - \Delta_i) \sum_{j=1}^i s_{j-1} + \Delta_{k-1}\sum_{j=1}^{k}s_{j-1} \notag \\
    \le & ~ \sum_{i=1}^{k-1} (\Delta_{i-1} - \Delta_i)i + \Delta_{k-1} (k-1) \notag \\
    = & ~ \sum_{i=1}^k \Delta_{i-1} \notag \\
    = & ~ f(S), \label{eq:matroid_constraint2} 
\end{align}
where the first step in the proof utilizes the method of summation by parts, the second step is derived from the inequality $\sum_{i=1}^ts_{i-1} \le t$, the third step is obtained by consolidating these terms. Lastly, the final step is justified by the equality $\sum_{i=1}^k\Delta_{i-1} = f(S)$.

Then we have
\begin{align*}
    f(T) \le & ~ f(S) + \sum_{j\in T-S}\Delta(j|S) \\
    \le & ~ f(S) + f(S) + 2\epsilon \sum_{i=1}^k s_{i-1} \\
    = & ~ 2f(S) + 2\epsilon \rank(M),
\end{align*}
where the first step follows from properties of submodular functions, the second step follows from Eq. \eqref{eq:matroid_constraint1} and \eqref{eq:matroid_constraint2}, and the last step follows from $\sum_{i=1}^k s_{i-1} = k = \rank(M)$.

Hence $f(S) \ge \frac{1}{2}f(T) - \rank(M)\epsilon$, which finishes the proof.
\end{proof}

\begin{theorem}[Restatement of Theorem~\ref{thm:main_formal_matroid}] \label{thm:app_main_formal_matroid}
There is a submodular function maximization algorithm such that, given a submodular function maximization problem with matroid constraint and two precision parameters $\epsilon,\delta$ as input, it runs in $O(\epsilon^{-2}(nd+n^2+ \rank(M)d^2)\log(n/\delta))$ time and returns a solution $S$ with $f(S) \ge 1/2\max_{S\in\mathcal{I}}\{f(S)\} - \rank(M)\epsilon$  with probability at least $1-\delta$.
\end{theorem}
\begin{proof}
Combining Lemma~\ref{lem:app_error_approximate_greedy_algo_matroid} (the correctness of Algorithm~\ref{alg:app_fast_submodular_matroid}) and time complexity analysis of operations in Section~\ref{sec:app_data_structure}, we prove the conclusion.
\end{proof}

\subsection{Constraints: Knapsack Constraints}\label{sec:app_extension:knapsack}
 
Another natural constraint that often appears in the submodular function maximization problem is the knapsack constraint.
It views the ground set as indices of items and each item $j\in[n]$ has a weight $w(j)$.
We have a knapsack to put some items into it, subject to the constraint that the total weight of selected items can not exceed the maximum load of the knapsack.

\begin{definition}[Maximization with knapsack constraints]
We call the maximization problem \eqref{eqn:app_knapsack_constraint} as the submodular function maximization problem with knapsack constraints.
\begin{equation}\label{eqn:app_knapsack_constraint}
\begin{aligned}
\max_{S \subseteq [n]} \quad & f(S)\\
\textrm{s.t.} \quad & \sum_{j\in S} w(j) \le W
\end{aligned}
\end{equation}
where $f$ is the objective submodular function, $w(j)\ge 0$ is the weight for item $j$, and $W\ge 0$ is the maximum load of the knapsack.
\end{definition}

Similarly, a natural way to adjust the greedy selection algorithm is to update $S_t$ as 
\begin{align*}
S_{t+1}\gets S_t + \{j_t\},
\end{align*}
where $j_t$ is selected as
\begin{align}\label{eqn:app_greedy_selection_knapsack}
    j_t = \operatornamewithlimits{argmax}_{w(j)+w(S_t) \le W,\; j \in \ov{S}_t} \frac{\Delta_f(j \mid S_t)}{w(j)}.
\end{align}
Compare the updating rule \eqref{eqn:app_greedy_selection_knapsack} for the knapsack case to the rule in Algorithm~\ref{alg:greedy}, we note that there are two differences: in \eqref{eqn:app_greedy_selection_knapsack} we further restrict $j$ to satisfy $w(j)+w(S_t) \le W$ and we substitute $\Delta_f(j|S_t)$ by $\frac{\Delta_f(j|S_t)}{w(j)}$.

For the additional restriction
\begin{align*}
w(j)+w(S_t) \le W,
\end{align*}
we can add this restriction to our data structure too.
For the $\Delta_f(j|S_t)$, we can modify our embedding representation.
Let $v_j=\frac{u_j}{\sqrt{w(j)}}$, then we have 
\begin{align*}
\frac{\Delta_f(j|S)}{w(j)} 
= & ~ \frac{u_j^\top h(S) u_j}{w(j)} \\
= & ~ v_j^\top h(S) v_j.
\end{align*}
The search for $j$ that maximizes $v_j^\top h(S) v_j$ is again in the form of queries can be solved by our dynamic data structure.

However, simply using the greedy strategy \eqref{eqn:app_greedy_selection_knapsack} is not guranteed to yield a small error.
Fortunately, it is proved that greedy selection by uniform cost and greedy selection by benefit per cost can not be bad at the same time \cite{leskovec2007cost}.
Combing the two greedy strategies, an error ratio of $1/2-1/(2e)$ is guaranteed.
Algorithm~\ref{alg:app_fast_submodular_knapsack} formally describes how to extend our method to the knapsack constraints.

\begin{algorithm}[!ht]\caption{Fast Submodular Function Maximization Algorithm with Knapsack Constraint}\label{alg:app_fast_submodular_knapsack}
\begin{algorithmic}[1]
\Procedure{FastKnapsackSelection}{$n,W,\{w(1),\cdots,w(n)\},\{u_1, \cdots, u_n \}\subseteq \R^d$}
    \State $t \gets 0$
    \State $S_0 \gets \emptyset$
    \State FQFS ds.\textsc{Init}($u_1,u_2,\cdots,u_n,\epsilon,\delta/n$) %\Hang{A small question: what's $k$?}
    \While{ds is not empty}
        \State $A_t \gets h (S_t)$
        \State $j \gets $ ds.\textsc{Query}($A_t$)
        \If{$w(S_t) + w(j) \le W$}
            \State $S_{t+1} \gets S_t \cup \{j\}$
            \State $t \gets t+1$
        \EndIf
        \State ds.\textsc{Delete}($j$)
    \EndWhile
    \State \Return $S_t$
\EndProcedure
\State 
\Procedure{FastKnapsackSubmodular}{$n,W,\{w(1),\cdots,w(n)\},\{u_1, \cdots, u_n \}\subseteq \R^d$}
    \For{$i = 1 \to n$}
        \State $v_i \gets u_i / \sqrt{w(i)}$
    \EndFor
    \State $S_{\text{uc}} \gets \text{FastKnapsackSelection}(n, W, \{w(i)\}_{i=1}^n, \{u_i\}_{i=1}^n)$
    \State $S_{\text{cb}} \gets \text{FastKnapsackSelection}(n, W, \{w(i)\}_{i=1}^n, \{v_i\}_{i=1}^n)$
    \If{$f(S_{\text{uc}}) > f(S_{\text{cb}})$}
        \State $S \gets S_{\text{uc}}$
    \Else
        \State $S \gets S_{\text{cb}}$
    \EndIf
    \State \Return $S$
\EndProcedure
\end{algorithmic}
\end{algorithm}

\begin{lemma}[Correctness,restatement of Lemma~\ref{lem:error_approximate_greedy_algo_knapsack}] \label{lem:app_error_approximate_greedy_algo_knapsack}
Given $\epsilon > 0$. Consider an approximate greedy algorithm $A$ (see Algorithm \ref{alg:app_fast_submodular_knapsack}) for the submodular function maximization with knapsack constraints that has an oracle $O$: taken an arbitrary set $S\subseteq[n]$ and $i \in [n]\backslash S$, returns a value $O(S,i)$ with guarantee that $\Delta(i|S)-\epsilon \le O(S,i) \le \Delta(i|S)+\epsilon$ and $O'(S,i)$ with guarantee that $\frac{\Delta(i|S)}{w(i)}-\epsilon \le O'(S,i) \le \frac{\Delta(i|S)}{w(i)}+\epsilon$.
In the first pass, in each time step $t$, it selects 
\begin{align*}
    j_t = \operatornamewithlimits{argmax}_{w(\{j\}+S_t) \le W,\; j \in \ov{S}_t} O(S_{t-1},j),
\end{align*}
and lets $S_t \gets S_{t-1} \cup \{j_t\}$.
Suppose the first pass ends with a set $S_k$.
In the second pass, in each time step $t$, it selects
\begin{align*}
    j'_t = \operatornamewithlimits{argmax}_{w(\{j\}+S_t) \le W,\; j \in \ov{S}_t} O'(S_{t-1},j),
\end{align*}
and lets $S'_t \gets S'_{t-1}\cup\{j'_t\}$.
Suppose the second pass ends with a set $S'_{k'}$.
Then it holds that
\begin{align*}
    \max\{f(S_k), f(S'_{k'})\} \ge (1/2-1/(2e)) \max_{T:w(T) \le W}\{f(T)\} - L\epsilon.
\end{align*}
\end{lemma}
\begin{proof}

Consider the second pass following Eq. \eqref{eqn:app_greedy_selection_knapsack} (corresponding to $S_{uc}$ in Algorithm \ref{alg:app_fast_submodular_knapsack}). Renumber the ground set $S = \{x_1,\cdots,x_n\}$ such that for each $i=0,1,\cdots,n$, the set of the selected elements at time step $i$ is $G_i := \{x_1,\cdots,x_i\}$ ($G_0:=\emptyset$). Let 
\begin{align*}
l = \arg\max_{i : c(G_i) \le B}\{c(G_i)\}
\end{align*}
be the index corresponding to the iterations. Hence $G_l$ is the selected set by the weighted greedy process. 

Let $\OPT$ be the optimal set, and let $L = c(\OPT)$, $w = |\OPT|$.

By greedy rule, 
\begin{align*}
    \frac{\Delta(x_i|G_{i-1})}{c(x_i)} + \epsilon \ge & ~ O'(G_{i-1},x_i) \\
    \ge & ~ \max_{y\in \ov{G_i}} \{O'(G_{i-1},y)\} \\
    \ge & ~ \max_{y \in \ov{G_i}} \{\frac{\Delta(y|G_{i-1})}{c(y)}\} - \epsilon.
\end{align*}

Since $f$ is nondecreasing, for $i=1,2,\cdots,l+1$, we have 
\begin{align*}
    f(\OPT) - f(G_{i-1}) &~ \le f(\OPT \cup G_{i-1}) - f(G_{i-1}) \\
    &= f(\OPT\backslash G_{i-1} \cup G_{i-1}) - f(G_{i-1}).
\end{align*}
Assume $\OPT \backslash G_{i-1} = \{y_1,\cdots,y_m\}$, and for $j=0,1,\cdots,m$, let $Z_j = f(G_{i-1}\cup\{y_1,y_2,\cdots,y_j\})$, then 
\begin{align*}
f(\OPT) - f(G_{i-1}) \le \sum_{j=1}^m \Delta(y_j | Z_{j-1}).
\end{align*}

Now notice that 
\begin{align*}
    \frac{\Delta(y_j | Z_{j-1})}{c(y_j)} - \epsilon \le \frac{\Delta(y_j|G_{i-1})}{c(y_j)} - \epsilon \le \frac{f(G_i) - f(G_{i-1})}{c(x_i)} + \epsilon,
\end{align*}
where the first step follows from $f$ is nondecreasing and the second step follows from the greedy rule, we have 
\begin{align*}
    \Delta(y_j|Z_{j-1}) \le \frac{c(y_j)}{c(x_i)}(f(G_i)-f(G_{i-1})) + 2 c(y_j) \epsilon.
\end{align*}

Since $\sum_{j=1}^m c(y_j) \le L$, it holds that 
\begin{align*}
    f(\OPT) - f(G_{i-1}) = & ~ \sum_{j=1}^m \Delta(y_j | Z_{j-1}) \\
    \le & ~ L \frac{f(G_{i}) - f(G_{i-1})}{c(x_i)} + 2L\epsilon,
\end{align*}
that is, 
\begin{align*}
    f(G_i) - f(G_{i-1}) \ge \frac{c(x_i)}{L}(f(\OPT) - f(G_{i-1})) - 2\epsilon.
\end{align*}

Then by induction, we have 
\begin{align*}
    f(G_i) \ge (1 - \prod_{k=1}^i(1-\frac{c(x_k)}{L})) f(\OPT) - 2L \epsilon.
\end{align*}
This is because first it holds for $i=1$; then suppose it holds for $i-1$, we have  
\begin{align}
    f(G_i) &\ge \frac{c(x_i)}{L}f(\OPT) + (1-\frac{c(x_i)}{L})f(G_{i-1}) - 2\epsilon \notag \\
    &\ge (1 - \prod_{k=1}^i(1-\frac{c(x_k)}{L})) f(\OPT) - 2L(1-\frac{c(x_i)}{L})\epsilon - 2\epsilon \notag\\
    &\ge (1 - \prod_{k=1}^i(1-\frac{c(x_k)}{L})) f(\OPT) - 2L(1-\frac{1}{L})\epsilon - 2\epsilon \notag \\
    &= (1 - \prod_{k=1}^i(1-\frac{c(x_k)}{L})) f(\OPT) - 2L \epsilon. \label{eq:app_knapsack_1}
\end{align}

After that, we have
\begin{align*}
    f(G_{l+1}) &\ge (1 - \prod_{k=1}^{l+1} (1 - \frac{c(x_k)}{L})) f(\OPT) - 2L \epsilon \\
    &\ge (1 - \prod_{k=1}^{l+1} (1 - \frac{c(x_k)}{G_{l+1}})) f(\OPT) - 2L \epsilon \\
    &\ge (1 - \prod_{k=1}^{l+1} (1 - \frac{1}{l+1}))  f(\OPT) - 2L \epsilon \\
    &\ge (1-\frac{1}{e})f(\OPT) - 2L \epsilon,
\end{align*}
where the first step follows from Eq. \eqref{eq:app_knapsack_1}, the second step follows from Jensen's inequality, 

Furthermore, note that the violating increase $F(G_{l+1}) - F(G_l)$ is bounded by $F(x^*)$ for $x^* = \arg\max_{x\in W} F(x)$, i.e. the second candidate solution considered by the modified greedy algorithm. Hence 
\begin{align*}
    f(G_l) + f(x^*) \ge & ~ f(G_{l+1} ) \\
    \ge & ~ (1-\frac{1}{e}) f(\OPT) - 2L \epsilon,
\end{align*}
so at least one of the values $F(x^*)$ or $F(G_l)$ must be greater than or equal to 
\begin{align*}
((1/2 - 1/(2e))f(\OPT) - L\epsilon).
\end{align*}
\end{proof}

%%%F.6, this is the knapsack version
\begin{theorem}[Restatement of Theorem~\ref{thm:main_formal_knapsack}] \label{thm:app_main_formal_knapsack}
There is a submodular function maximization algorithm such that, given a submodular function maximization problem with knapsack constraint and two precision parameters $\epsilon,\delta$ as input, it runs in $O(\epsilon^{-2}(nd+n^2+kd^2)\log(k/\delta))$ time, where $k$ is the size of the optimal solution, and returns a solution $S$ with
\begin{align*}
    f(S) \ge (1/2-1/(2e))\max_{S:w(S) \le W}\{f(S)\} - L\epsilon
\end{align*}
with probability at least $1-\delta$.
\end{theorem}
\begin{proof}
Combining Lemma~\ref{lem:app_error_approximate_greedy_algo_knapsack} (the correctness of Algorithm~\ref{alg:app_fast_submodular_knapsack}) and time complexity analysis of operations in Section~\ref{sec:app_data_structure}, we prove the conclusion.

\end{proof}

\subsection{Online Decision}\label{sec:app_extension:online}

In the online decision setting, we receive one new element of the ground set in each iteration $t$, and must decide $S_t$ immediately.
\begin{definition}[Online Setting in \cite{streeter2008online}]
Let $A$ denote a finite set of activities and 
$(a,\tau)\in A\times\mathbb{R}_{+}$ denote an action, which means spending time $\tau$ on activity $a$.
We define a schedule $s$ as a sequence of actions $s:=\langle (a_1,\tau_1), \cdots (a_m, \tau_m)\rangle$.
Let $\mathcal{S}$ denote the set of all schedules.
Let $f^{(1)}\cdots f^{(n)}$ denote $n$ jobs,
where job $f^{(i)}:\mathcal{S}\rightarrow [0,1]$ is a monotone submodular function.
In each iteration, one job $f^{(i)}$ arrives.
At time $t$, we have the information about $\{f^{(i)}\}_{i=1}^{t-1}$.
The schedule $S^{(i)}$ for $f^{(i)}$ must be at time $t$.
We define the time cost by a schedule $S$ as $\ell(S):=\sum_{(a,\tau)\in S} \tau$.
The goal of online submodular function maximization is to minimize the regret $R$, which is defined as
\begin{align*}
    R(S^{(1)},\cdots,S^{(n)}) := (1-1/e)\frac{\sum_{i=1}^n f^{(i)}(S^*)}{n} - \frac{\sum_{i=1}^n f^{(i)}(S^{(i)})}{n}.
\end{align*} 
where $S^*:=\arg\max_{S\in\mathcal{S}} \frac{\sum_{i=1}^n f^{(i)}(S*)}{n}$ is the optimal schedule, subject to the constraint  $\E[\ell(S^{(i)})] = T$.
\end{definition}

An algorithm named online greedy (OG) achieving small regret bound is proposed in \cite{streeter2008online} for the online submodular function maximization.
For the OG algorithm, from the perspective of operations, the main bottleneck is still greedily searching $\frac{\Delta_f(j|S_{t-1})}{\tau_j}$.
Hence our method could be adjusted for the online setting too.

\begin{algorithm}[!ht]\caption{Our conceptual algorithm -- semi-online }\label{alg:ours_online}
\begin{algorithmic}[1]
\Procedure{FastSubmodularMaximization}{$n,k,\{u_1, \cdots, u_n \} \subset \R^d$}
    \State \text{ds}.\textsc{Init}($u_1, \cdots u_n$)
    \State $S_0 \gets \emptyset$
    \For{$t=0 \to k-1$}
        \State $A_t \gets h (S_t)$
        \State Adversary gives a pair $(i,z)$ where $i \in [n]\backslash S_t$ and $z \in \R^d$
        \State $\text{ds}.\textsc{Update}(i,z)$ %\Comment{We replace $u_i$ by $z$}
        %\State {\color{blue}/* $j = \arg\max_{ j \in \ov{S}_t } u_j^\top A_t u_j $ */} \label{line:argmax}
        \State $j \gets \text{ds}.\textsc{Query}(A_t)$
        \State $S_{t+1} \gets S_t \cup \{ j \}$
        \State $\text{ds}.\textsc{Del}(j)$
     \EndFor
    \State \Return $S_k$
\EndProcedure
\end{algorithmic}
\end{algorithm}
% More details are refered to Section \ref{sec:app_extension}.

%\iffalse
%\Zhao{We can write this between May 19 and May 26.}

%\section{Knapsack Contraints}

%\Yitan{Double check this. If we really could deal with knapsack constraint and matroid constraint, then please add a new section to discuss the extension.}

%\section{Online}
%\Yitan{For the online version, we should be able to use the same data structure to accelerate.}
%See algorithm OG in \href{http://reports-archive.adm.cs.cmu.edu/anon/2007/CMU-CS-07-171.pdf}{online maximization} on page 17.

\section{Fast Submodular Function Maximization with Data Structure} \label{sec:combination}

%In this section, we utilize our data structure to design a greedy submodular algorithm. 

In this section, we present the missing proof of Theorem~\ref{thm:main_formal}.

\begin{theorem}[Combination of our data structure with the submodular function maximization problem.Restatement of Theorem~\ref{thm:main_formal}] \label{thm:app_main_formal}
There is a submodular algorithm such that, given a submodular function maximization problem with cardinality constraint and two precision parameters $\epsilon,\delta$ as input, it runs in $O(\epsilon^{-2}(nd+kn+kd^2)\log(k/\delta))$ time and returns a solution $S$ with $f(S) \ge (1-1/e)\max_{|S|=k}\{f(S)\} - k(2-1/e)\epsilon$  with probability at least $1-\delta$.
\end{theorem}

\begin{proof}
% Combining Corollary~\ref{cor:app_error_approximate_greedy_algo} (the correctness of Algorithm~\ref{alg:fast_submodular}) and time complexity analysis of operations in Section~\ref{sec:app_data_structure}, we prove the conclusion. % Revised in ICML2026

Combining Corollary~\ref{cor:error_approximate_greedy_algo} (the correctness of Algorithm~\ref{alg:fast_submodular}) and time complexity analysis of operations in Section~\ref{sec:app_data_structure}, we prove the conclusion.
\end{proof}

Corollary \ref{cor:error_approximate_greedy_algo} establishes that an approximate greedy algorithm using an orcale with error $\epsilon$ achieves a solution $S_{k}$ satisfying 
\begin{align*}
    f(S_{k}) \ge (1 - \frac{1}{e}) \opt - k(2 - 1/e) \epsilon
\end{align*}
where $\opt = \max_{|T|=k}\{f(T)\}$. In Theorem \ref{thm:main_informal}, the informal guarantee simplifies this to
\begin{align*}
    f(S) \ge (1 - \frac{1}{e}) \opt - k.
\end{align*}
The additive error $-k$ or $-k(2 - 1/e ) \epsilon$ raises concerns about achieving a constant factor approximation. To assess when the additive error is negligible, consider whether the guarantee matches a purely multiplicative bound:
\begin{align*}
    (1 - \frac{1}{e}) \opt - k 
    \ge & ~ (1 - \frac{1}{e} - \epsilon) \opt  \\
    \epsilon \opt \ge & ~ k.
\end{align*}
In practical submodular optimization tasks such as sensor placement, influence maximization, or document summarization the objective function value often grows approximately linearly with the number of selected items.

Since the embedding in Equation~\ref{eqn:decompose_f_3} is  an exact isomorphism of the original submodular function, the dimension $d$ does not alter the function's submodularity and therefore has no effect on the $(1 - 1/e)$ approximation ratio. Dimension influences only the arithmetic cost of evaluating the quadratic form $u_i h(S) u_i$, and thus appears exclusively in the running time bound.
%removed limitation and impact statemetn for AISTATS 2026 
%\input{56_limitation}
%\input{57_impact}

\end{document}